\documentclass{article}

\usepackage{booktabs}
\usepackage[table]{xcolor}
\usepackage{multirow}

\usepackage{microtype}
\usepackage{graphicx}
\usepackage{subcaption}
\usepackage{booktabs}

\usepackage{hyperref}

\usepackage[accepted]{icml2026}

\usepackage{amsmath}
\usepackage{amssymb}
\usepackage{mathtools}
\usepackage{amsthm}

\usepackage[capitalize,noabbrev]{cleveref}

\theoremstyle{plain}
\newtheorem{theorem}{Theorem}[section]
\newtheorem{proposition}[theorem]{Proposition}
\newtheorem{lemma}[theorem]{Lemma}

\theoremstyle{definition}
\newtheorem{definition}[theorem]{Definition}

\theoremstyle{remark}
\newtheorem{remark}[theorem]{Remark}

\usepackage[textsize=tiny]{todonotes}

\icmltitlerunning{From Outcomes to Actions: Leveraging Hindsight for Long-Horizon Language Agent Training}

\usepackage[most]{tcolorbox}

\tcbset{
  aibox/.style={
    width=\linewidth,
    top=8pt,
    bottom=4pt,
    left=2pt,
    right=2pt,
    colback=blue!6!white,
    colframe=black,
    colbacktitle=black,
    enhanced,
    center,
    attach boxed title to top left={yshift=-0.1in,xshift=0.15in},
    boxed title style={boxrule=0pt,colframe=white,},
  }
}
\newtcolorbox{AIbox}[2][]{aibox,title=#2,#1}
\definecolor{lightblue}{rgb}{0.22,0.45,0.70}

\begin{document}

\twocolumn[

  \icmltitle{From Outcomes to Actions: Leveraging \\
Hindsight for Long-Horizon Language Agent Training}

  \icmlsetsymbol{equal}{*}

  \begin{icmlauthorlist}
    \icmlauthor{Zishang Jiang}{sds}
    \icmlauthor{Tingyun Li}{sds}
    \icmlauthor{Jinyi Han}{saie}
    \icmlauthor{Xinyi Wang}{sds}
    \icmlauthor{Sihang Jiang}{ccsai}
    \icmlauthor{Yizhou Ying}{ccsai}
    \icmlauthor{Xiaojun Meng}{huawei}
    \icmlauthor{Jiansheng Wei}{huawei}
    \icmlauthor{Jiaqing Liang}{sds}
    \icmlauthor{Yanghua Xiao}{ccsai}
  \end{icmlauthorlist}

  \icmlaffiliation{sds}{School of Data Science, Fudan University}
  \icmlaffiliation{saie}{Shanghai Institute of Artificial Intelligence for Education, East China Normal University}
  \icmlaffiliation{ccsai}{College of Computer Science and Artificial Intelligence, Fudan University}
  \icmlaffiliation{huawei}{Huawei Large Model Data Technology Lab}

  \icmlcorrespondingauthor{Jiaqing Liang}{liangjiaqing@fudan.edu.cn}

  \icmlkeywords{Machine Learning, ICML}

  \vskip 0.3in
]

\printAffiliationsAndNotice{}

\begin{abstract}
Reinforcement learning (RL) has become a widely adopted technique for improving large language models (LLMs) on complex tasks. Despite this progress, existing RL methods still face challenges in training agents with longer-horizon interactions. One major bottleneck is distinguishing the contribution of different actions in long-horizon interaction, leading to high optimization variance. To address this, we introduce a novel policy gradient method, \textbf{H}indsight \textbf{P}olicy \textbf{O}ptimization (\textbf{HPO}), that projects both the current policy distribution and the hindsight distribution into an intent space and extracts low-variance learning signals from the Wasserstein distance between them. We theoretically and empirically show that aggregating semantically similar states and actions in the intent space yields a bounded-variance estimator and improves policy performance stably. Our code is available online\footnote{https://github.com/Jiangzs1028/hpo}.

\end{abstract}

\section{Introduction}
Reinforcement learning (RL) has become a widely adopted technique for improving large language models (LLMs) on complex decision-making tasks. Compared to earlier prompt-based approaches, RL allows LLM-based agents to actively explore and learn from interaction with the environment, and has shown strong potential in applications such as tool use~\cite{jin2025search,cao2025skyrl}, smart device operation~\cite{luo2025gui}, and interactive environments~\cite{abdulhailmrl,hu2024survey}.

\begin{figure}[t]
    \centering

    \includegraphics[width=0.9\linewidth]{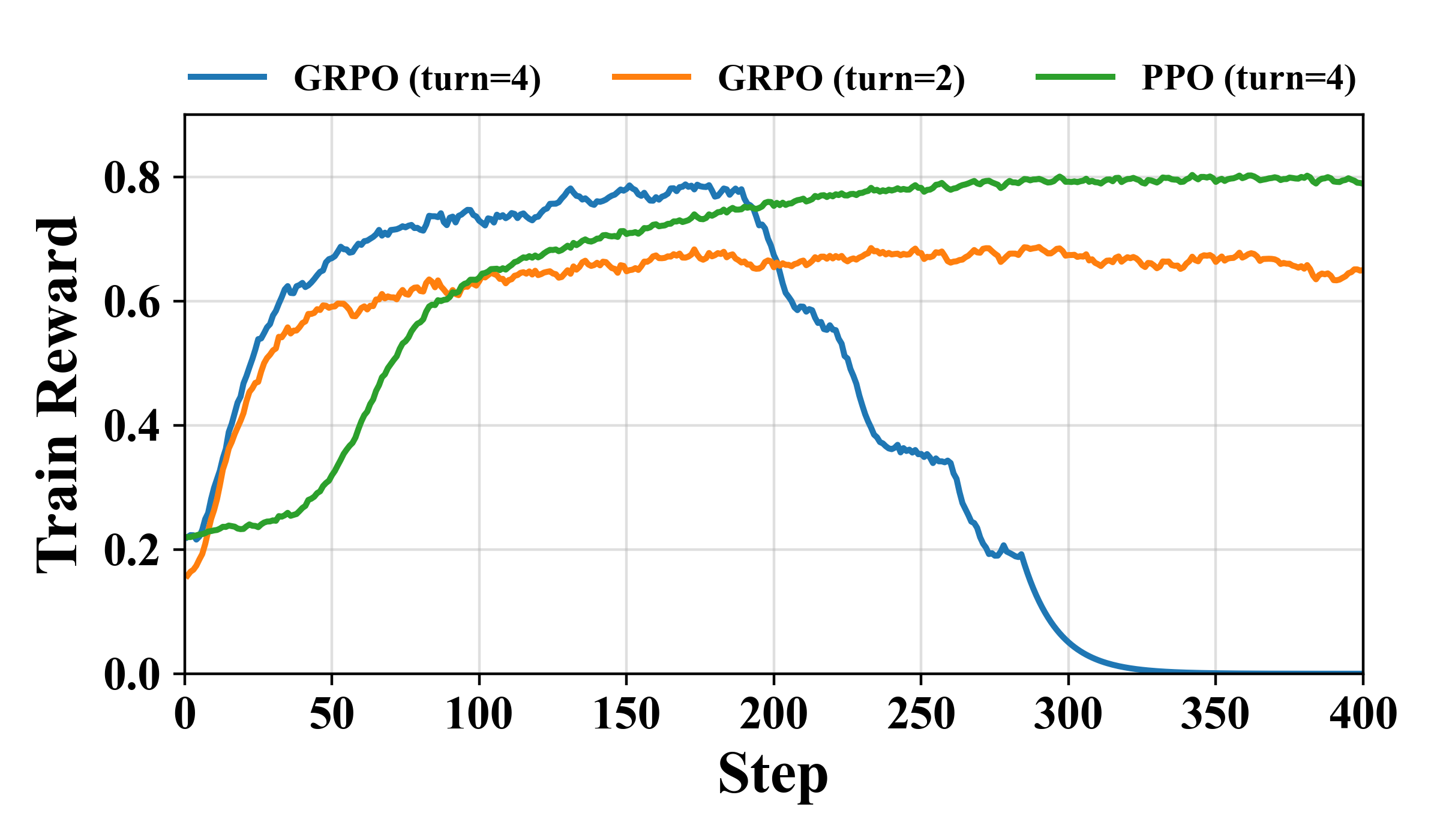}
    \caption{Existing RL methods struggle in long-horizon interactions. GRPO becomes unstable with longer-horizon interactions, while PPO remains stable but converges slowly due to critic warmup.}
    \label{fig:intro_plot}
    \vspace{-1.4em}
\end{figure}

Despite this progress, existing RL algorithms still face challenges in training agents for longer-horizon interactions. As illustrated in Figure \ref{fig:intro_plot}, GRPO~\cite{shao2024deepseekmath} can maintain stable training under shorter interaction horizons, but becomes unstable as the horizon length increases, often exhibiting sudden reward collapse~\cite{jin2025search,xue2025simpletir,sun2025zerosearch}. In contrast, PPO~\cite{schulman2017proximal} achieves more stable training behavior, but typically converges more slowly due to the additional requirement of training a critic model.

One major bottleneck is distinguishing the contribution of different actions in long-horizon interaction, leading to high optimization variance. Unlike single-step tasks, multi-turn decision-making requires attributing outcomes to a sequence of interdependent actions, where effective and ineffective actions are often interleaved within the same trajectory~\cite{zhang2025landscape}. However, RL methods that rely solely on outcome rewards, such as GRPO, typically propagate the final outcome uniformly across all actions, which may mistakenly reinforce some ineffective actions and thereby destabilize training~\cite{zhang2025landscape}. In contrast, PPO uses a critic to better distinguish actions, leading to more stable training, but it converges more slowly due to the warm-up required by the critic~\cite{jin2025search}.

We define the hindsight distribution as the optimal action distribution induced by an agent retrospectively reselecting its early actions after observing the final outcome, and show that existing RL methods with Monte Carlo estimation can be interpreted as a \textit{point-wise} comparison of the \textbf{discrepancy between the current policy distribution and this hindsight distribution} \textit{in a discrete state–action space}. This point-wise comparison \textbf{ignores the underlying semantic structure of the language action space}, resulting in unbounded variance in exponentially large language action spaces.

\textbf{Although the action space of large language models is vast, the set of underlying intents they can express is comparatively limited.} For example, \textit{Who is the president of the United States?} and \textit{I want to know who the U.S. president is} correspond to distinct language actions but convey nearly identical semantic intent.

Based on this insight, we introduce a novel policy gradient method, \textbf{H}indsight \textbf{P}olicy \textbf{O}ptimization (\textbf{HPO}), that projects both the current policy distribution and the hindsight distribution into an intent space and measures their discrepancy using the Wasserstein distance~\cite{villani2008optimal}. We theoretically show that aggregating semantically similar states and actions in this lower-dimensional intent space improves statistical efficiency and yields a bounded-variance estimator.

Our contributions are as follows:
\begin{enumerate}
\item We reformulate existing Monte Carlo policy gradient methods from a hindsight distribution perspective, and show that point-wise comparisons in the discrete language action space can lead to unbounded variance due to ignoring the underlying semantic structure.
\item We propose HPO, a principled approach that constructs hindsight distributions in an intent space and derives bounded-variance learning signals for smoother and more sample-efficient policy optimization, with negligible computational overhead.
\item We evaluate HPO on multiple long-horizon tasks and show that it substantially reduces training noise and stabilizes optimization. Further experiments show that such learning signals are interpretable and remain sufficient to drive effective policy learning even without outcome rewards.
\end{enumerate}

\section{Preliminaries}
\subsection{Task Formulation}

We formulate language action tasks as a Markov decision process (MDP)
$\mathcal{M}=(\mathcal{S},\mathcal{A},\mathcal{T},\mathcal{R}, \gamma)$,
where $\mathcal{S}$ denotes the state space,
$\mathcal{A}$ is the action space of natural language actions,
$\mathcal{T}:\mathcal{S}\times\mathcal{A}\rightarrow\mathcal{S}$ is the transition function,
$\mathcal{R}$ is the reward function, and
$\gamma\in(0,1]$ is the discount factor.
In the RLVR setting, rewards are sparse and only provided at the end of each episode and the discount factor is typically set to $\gamma=1$. When the context is clear, we omit $\gamma$ for notational simplicity.
An agent interacts with the environment over discrete time steps.
At each step $t$, the agent observes the current state $s_t\in\mathcal{S}$
and selects an action $a_t\sim\pi_\theta(\cdot\mid s_t)$.
The environment then transitions to the next state $s_{t+1}$
according to $\mathcal{T}(s_{t+1}\mid s_t,a_t)$.
The objective is to maximize the expected cumulative reward under the policy $\pi_\theta$.

\subsection{Value Function} For $M_\pi=(S_0,A_0, S_1,A_1,\cdots)$, there is an occupancy measure $\rho_\pi$ which satisfies $\rho_\pi(s,a)=(1-\gamma)\sum_{t=0}^\infty\gamma^t\mathbb{P}(S_t =s, A_t=a)$. Moreover, there is a one-to-one correspondence between those measures and the policies.
The state-value function under policy $\pi$ is defined as $V_\pi(s)\triangleq \mathbb{E}_{\pi}\!\left[\sum_{t=0}^{\infty}\gamma^t r_t \mid S_0=s\right]$, which measures the expected discounted return starting from state $s$. Similarly, the action-value function is defined as $Q_\pi(s,a)\triangleq \mathbb{E}_{\pi}\!\left[\sum_{t=0}^{\infty}\gamma^t r_t \mid S_0=s, A_0=a\right]$.

\subsection{Wasserstein Distance}
Wasserstein distance, originating from optimal transport theory~\cite{villani2008optimal}, measures the discrepancy between two probability distributions by the minimum cost of transporting mass from one distribution to the other. Given two probability measures $\mu_A$ and $\mu_B$ on a metric space $(\mathcal{X}, \text{dis})$, the 1-Wasserstein distance is defined as
\begin{equation}
W_1(\mu_A, \mu_B)
=
\inf_{\pi \in \Pi(\mu_A, \mu_B)}
\int_{\mathcal{X}\times\mathcal{X}} \text{dis}(x_A, x_B)\, d\pi(x_A, x_B),
\end{equation}
where $\Pi(\mu_A, \mu_B)$ is the set of couplings whose marginals are $\mu_A$ and $\mu_B$. Intuitively, each coupling can be viewed as a possible transportation plan that describes how mass from $\mu_A$ is assigned to mass in $\mu_B$. In practice, $W_1$ is often computed via its dual formulation:

\vspace{-1.5em}
\begin{small}
\begin{equation}
W_1(\mu_A,\mu_B)
=
\sup_{\|f\|_{L}\le 1}
\left(
\mathbb{E}_{x\sim\mu_A}[f(x)]
-
\mathbb{E}_{x\sim\mu_B}[f(x)]
\right),
\end{equation}
\end{small}
\vspace{-1.5em}

where the maximization is over all 1-Lipschitz functions $f$. The optimal function $f^\star$ in this dual problem is referred to as the \emph{Kantorovich potential}~\cite{villani2008optimal}.
\label{k_potential}

\section{The Challenge of Long-Horizon Language Agent Reinforcement Learning}
Existing studies~\cite{jin2025search,xue2025simpletir,sun2025zerosearch} have found that RL methods become increasingly unstable as the interaction horizon grows. In this section, we show that this instability arises from the high-variance gradient noise induced by long-horizon interactions, which free-critic methods struggle to mitigate and may even further exacerbate.

\subsection{Rethinking the Objective in RL for LLMs}
For a unified analysis, common policy optimization methods used in RL for LLMs, including GRPO, REINFORCE++~\cite{hu2025reinforce++}, and PPO, can all be expressed in the following generic policy gradient form like REINFORCE~\cite{williams1992simple}:

\vspace{-0.8em}
{\small
\begin{equation}
    \nabla_\theta \mathcal{J}(\theta) = \mathbb{E}_{\tau \sim \pi_\theta}\left[\sum_{t=0}^{\infty}\nabla_\theta \log \pi_\theta(a_t \mid s_t)(\hat Q(s_t,a_t)-b_t)\right]
\end{equation}
\small}

\vspace{-0.8em}
where $\hat Q(s_t, a_t)$ denotes an estimator of $Q_\pi(s_t,a_t)$.

For example, the objective in GRPO can be interpreted as repeatedly sampling $G$ trajectories from a shared initial state $s_0$. For analytical convenience, we assume equal return variance across different \(s_0\). Concretely, GRPO optimizes

\vspace{-0.8em}
{\small
\begin{align}
&\nabla_\theta \mathcal{J}_{\text{GRPO}}(\theta)
=
\mathbb{E}_{\{\tau_i\}_{i=1}^{G} \sim \pi_\theta} \nonumber \\
&\left[
\frac{1}{G}
\sum_{i=1}^{G}
\sum_{t=0}^{\infty}
\nabla_\theta \log \pi_\theta(a_{i,t} \mid s_{i,t})
\big(\hat Q(s_{i,t}, a_{i,t}) - b_{i,t}\big)
\right],
\end{align}
\small}

\vspace{-0.8em}
where \(\hat Q(s_{i,t}, a_{i,t})=R(\tau_i)\) and $b_{i,t}=\frac{1}{G}\sum_{j=1}^{G}R(\tau_j)$. The latter can be viewed as a Monte Carlo (MC) estimate \(\bar V_\pi(s_0)\) of \( V_\pi(s_0)\).

Although MC estimation provides an unbiased estimate of $Q_\pi$, it suffers from excessive variance by accumulating stochasticity from future actions~\cite{sutton1998reinforcement}. For example, even if the agent takes a correct action at the current step, the final return still depends on every later sampled action: a wrong one can turn an otherwise successful trajectory into a failure. Therefore, the same current action may receive different terminal returns across rollouts, and this variance can grow exponentially with the horizon.

It is well known that excessive variance can hinder effective training updates, so existing RL algorithms typically seek to reduce training variance. Algorithms such as GRPO introduce a fixed baseline to reduce variance without introducing bias, and have achieved strong performance on single-turn tasks such as mathematical reasoning. However, we show below that the effectiveness of this approach diminishes significantly in long-horizon interactions.

\begin{lemma}[Variance Decomposition]
\label{thm:var_decomp_baseline}
Consider the REINFORCE gradient estimator $\hat{G}_b$ with the baseline $b$,
\begin{equation}
\hat{G}_b \;=\; g(a)\,\big(\hat Q(s,a)-b(s)\big),
\end{equation}
where $a \sim \pi(\cdot\mid s)$ and $g(a)=\nabla_\theta \log \pi_\theta(a\mid s)$.

Let $\|\cdot\|$ denotes the Euclidean norm. Then its conditional variance admits the decomposition
\vspace{-0em}
\begin{small}
\begin{align}
\mathrm{Var}(\hat{G}_b \mid s)\!=\!\mathbb{E}\!\left[\|g(a)\|^2\mid s\right]
\!\cdot\!
\mathrm{Var}\!\left(\hat Q(s,a)-b(s)\mid s\right)
\label{var_decomp}
\end{align}
\end{small}
\end{lemma}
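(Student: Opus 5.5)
The plan is to use the definition of the variance of a vector-valued estimator, $\mathrm{Var}(\hat G_b\mid s)=\mathbb{E}[\|\hat G_b\|^2\mid s]-\|\mathbb{E}[\hat G_b\mid s]\|^2$, and handle the two terms separately. Write $Y(s,a)=\hat Q(s,a)-b(s)$ for the scalar advantage estimate. The first step is the mean term. Since $b(s)$ does not depend on $a$, the score-function identity gives
\begin{equation*}
\mathbb{E}_{a\sim\pi}[g(a)\mid s]=\sum_a \nabla_\theta\pi_\theta(a\mid s)=\nabla_\theta 1=0.
\end{equation*}
So the baseline contributes nothing to the mean, and $\mathbb{E}[\hat G_b\mid s]=\mathbb{E}[g(a)\hat Q(s,a)\mid s]$.

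The second step is the second-moment term $\mathbb{E}[\|g(a)\|^2 Y^2\mid s]$. This is the crux. The identity in \eqref{var_decomp} is a product formula, and it is not true for arbitrary correlated $g$ and $Y$. I would therefore make explicit the modelling assumption that appears to be implicit in the lemma. Conditional on $s$, the score $g(a)$ and the return estimate $\hat Q(s,a)$ should be treated as uncorrelated in the sense needed for the computation, namely independent. This is the regime the paper targets: in long horizons, $\hat Q=R(\tau)$ is dominated by the stochasticity of \emph{future} actions rather than the current one. Under this independence:
\begin{itemize}
\item the mean factorises as $\mathbb{E}[\hat G_b\mid s]=\mathbb{E}[g\mid s]\,\mathbb{E}[Y\mid s]=0$;
\item the second moment factorises as $\mathbb{E}[\|g\|^2Y^2\mid s]=\mathbb{E}[\|g\|^2\mid s]\,\mathbb{E}[Y^2\mid s]$.
\end{itemize}
Hence $\mathrm{Var}(\hat G_b\mid s)=\mathbb{E}[\|g\|^2\mid s]\,\mathbb{E}[Y^2\mid s]$.

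The third step converts $\mathbb{E}[Y^2\mid s]$ into $\mathrm{Var}(Y\mid s)$. We have $\mathbb{E}[Y^2\mid s]=\mathrm{Var}(Y\mid s)+(\mathbb{E}[\hat Q\mid s]-b(s))^2$. The extra squared term vanishes exactly when the baseline is the value baseline $b(s)=\mathbb{E}[\hat Q(s,a)\mid s]=V_\pi(s)$. The GRPO group mean $\bar V_\pi(s_0)$ estimates this baseline, and the paper already interprets it that way. I would state this as the standing assumption, or alternatively present the general formula with the bias term and note that it reduces to \eqref{var_decomp} for the value baseline.

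I expect the main obstacle to be justifying the decoupling in the second step. Without independence, the exact expression contains an additional covariance term,
\begin{equation*}
\mathrm{Cov}\bigl(\|g\|^2, Y^2 \mid s\bigr),
\end{equation*}
and the mean $\mathbb{E}[gY\mid s]$ need not vanish. My first attempt would be to state the lemma under the independence assumption, since the subsequent argument only needs the scaling $\mathrm{Var}(\hat G_b)\propto \mathrm{Var}(\hat Q-b)$. As a fallback I would bound the cross terms, for instance by Cauchy--Schwarz, to obtain the decomposition as an approximation up to a covariance correction.
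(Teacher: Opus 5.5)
Your proposal matches the paper's proof step for step. It expands $\mathrm{Var}(\hat G_b\mid s)$ as second moment minus squared mean, uses the score identity $\mathbb{E}[g(a)\mid s]=0$, factorizes $\mathbb{E}[\|g(a)\|^2X^2\mid s]$ under conditional independence of $g(a)$ and $\hat Q(s,a)$, and then invokes the value baseline $b(s)=\mathbb{E}[\hat Q(s,a)\mid s]$ to turn $\mathbb{E}[X^2\mid s]$ into $\mathrm{Var}(X\mid s)$; these are exactly the two assumptions the paper relies on, and you are right that the paper says ``uncorrelated'' but actually uses independence.
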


\begin{remark}
Eq.~\eqref{var_decomp} shows that the variance of the policy gradient can be decomposed into the product of \textbf{the gradient norm} $\mathbb{E}\!\left[\|g(a)\|^2\mid s\right]$ and \textbf{the variance of the baseline-adjusted $Q$ signal} $\hat Q(s,a)-b(s)$.
\end{remark}

\paragraph{The gradient norm.}
The gradient-norm factor in Eq.~\eqref{var_decomp} can magnify optimization noise. With rounds of external tool feedback, some low-probability actions are increasingly sampled (e.g., meaningless or repetitive tokens)~\cite{xue2025simpletir}. Due to the logarithmic form of the policy gradient, these rare actions can induce disproportionately large, and even unbounded gradient norms, which further amplify the noise variance caused by state mismatch in Eq.~\eqref{var_decomp}. This perspective also helps explain why filtering strategies that remove low-probability samples can mitigate collapse~\cite{xue2025simpletir}.

\paragraph{The variance of the baseline-adjusted $Q$ signal.}
The second factor of the gradient variance is the variance of the baseline-adjusted $Q$ signal, $\hat Q(s,a)-b(s)$. In principle, there exists an optimal baseline that minimizes this variance, but it is generally difficult to compute in practice. Existing methods therefore approximate it in different ways: GRPO uses a fixed baseline, while PPO learns a critic-based baseline.

\begin{theorem}[Optimal Baseline and Excess Variance]
\label{thm:optimal_baseline}
Among all scalar baselines $b(s)$ independent of the sampled action $a$, the conditional variance
$\mathrm{Var}(\hat G_b \mid s)$ is minimized by
\begin{equation}
b^*(s) =
\frac{\mathbb{E}\!\left[\|g(a)\|^2 \hat Q(s,a)\mid s\right]}
{\mathbb{E}\!\left[\|g(a)\|^2\mid s\right]}.
\end{equation}
Moreover, for any alternative baseline $\tilde b(s)$, the increase in variance admits the exact expression
\begin{align}
    \mathrm{Var}(\hat{G}_{\tilde b} \mid s)& \,-\,  \mathrm{Var}(\hat{G}_{b^*} \mid s), \,=\,   \nonumber\\
&\mathbb{E}\!\left[\,\|g(a)\|^2\mid s\right](\tilde b(s)-b^*(s))^2.
\end{align}
\end{theorem}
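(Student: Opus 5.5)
Conditional variance of a vector-valued estimator is a scalar here, $\mathrm{Var}(\hat G_b\mid s)=\mathbb{E}[\|\hat G_b\|^2\mid s]-\|\mathbb{E}[\hat G_b\mid s]\|^2$. I will \emph{not} use the product form of Lemma~\ref{thm:var_decomp_baseline} directly: that decomposition needs an independence-type assumption between $\|g(a)\|^2$ and the signal. With it, the minimizing baseline would simply be $\mathbb{E}[\hat Q\mid s]$, not the weighted ratio $b^*$. Instead I work from the definition and exploit the fact that the mean term does not depend on the baseline.

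\textbf{Step 1 (mean is baseline-free).} Since $b(s)$ does not depend on $a$, we have $\mathbb{E}[g(a)b(s)\mid s]=b(s)\,\mathbb{E}[\nabla_\theta\log\pi_\theta(a\mid s)\mid s]=b(s)\nabla_\theta\sum_a\pi_\theta(a\mid s)=0$. This is the standard score-function identity, assuming enough regularity to exchange sum and derivative. Hence $\mathbb{E}[\hat G_b\mid s]=\mathbb{E}[g(a)\hat Q(s,a)\mid s]$ for every $b$. It follows that minimizing the variance over $b$ is the same as minimizing the second moment
\[
h(b)=\mathbb{E}\big[\|g(a)\|^2(\hat Q(s,a)-b)^2\mid s\big].
\]

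\textbf{Step 2 (quadratic in $b$).} Expanding gives
\[
h(b)=\mathbb{E}[\|g\|^2\hat Q^2\mid s]-2b\,\mathbb{E}[\|g\|^2\hat Q\mid s]+b^2\,\mathbb{E}[\|g\|^2\mid s],
\]
which is a convex quadratic in $b$ whenever $\mathbb{E}[\|g\|^2\mid s]>0$. Its unique minimizer is obtained by setting the derivative to zero, and this yields exactly $b^*(s)$. If $\mathbb{E}[\|g\|^2\mid s]=0$, then $g=0$ almost surely and every baseline is optimal, so I will state the nondegeneracy assumption explicitly.

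\textbf{Step 3 (exact excess).} For a quadratic $h(b)=\alpha b^2-2\beta b+c$ with minimizer $b^*=\beta/\alpha$, completing the square gives $h(\tilde b)-h(b^*)=\alpha(\tilde b-b^*)^2$. Because the mean-square term cancels in the difference of variances by Step 1, this gives the stated identity with $\alpha=\mathbb{E}[\|g(a)\|^2\mid s]$. An equivalent direct route is to write $\hat Q-\tilde b=(\hat Q-b^*)+(b^*-\tilde b)$ and expand. The cross term $2(b^*-\tilde b)\,\mathbb{E}[\|g\|^2(\hat Q-b^*)\mid s]$ vanishes by the definition of $b^*$.

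\textbf{Main obstacle.} The algebra is routine; the delicate point is interpretation. Step 1 requires $\hat Q(s,a)$ to be treated consistently: if $\hat Q$ carries extra randomness from future rollouts, all expectations must be taken jointly over $a$ and that randomness given $s$. The score identity then still kills only the baseline term, because $b$ depends on $s$ alone. I would also note the integrability assumptions $\mathbb{E}[\|g\|^2\hat Q^2\mid s]<\infty$ and $\mathbb{E}[\|g\|^2\mid s]<\infty$, so that all quantities are finite.
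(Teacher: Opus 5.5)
Your proof is correct and follows the paper's argument essentially step for step. The score identity $\mathbb{E}[g(a)\mid s]=0$ makes the mean term $\|\mathbb{E}[g(a)\hat Q(s,a)\mid s]\|^2$ independent of $b$, so minimizing the variance reduces to minimizing the weighted second moment $\mathbb{E}[\|g(a)\|^2(\hat Q(s,a)-b)^2\mid s]$; the excess-variance identity then follows by expanding around $b^*$, where the cross term vanishes by the definition of $b^*$. Your nondegeneracy condition $\mathbb{E}[\|g(a)\|^2\mid s]>0$ and the integrability assumptions are caveats the paper leaves implicit, and you are right to avoid the product-form decomposition of the preceding lemma, since under its independence-type assumption $b^*$ would collapse to $\mathbb{E}[\hat Q(s,a)\mid s]$.
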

\vspace{-0.8em}

We remark that even though the optimal state-only baseline is known, it is rarely used in practice due to its computational intractability~\cite{li2024remax,duan2016benchmarking}. Rather, for both computational and conceptual benefit, the choice of $\mathbb{E}[\hat Q(s,a)\mid s]$ is often used~\cite{schulman2017proximal,wu2018variance}. In particular, when $\|g(a)\|^2$ and $\hat Q(s,a)$ are loosely correlated, this baseline is close to the optimal baseline. For ease of analysis and empirical evaluation, we follow this setting.

Under this proxy, the minimum-variance baseline for MC returns is the state-dependent value function $V_\pi(s_t)=\mathbb{E}[R\mid s_t]$. GRPO approximate it with a fixed baseline $\bar V_\pi(s_0)$ shared across all subsequent time steps. This approximation can be effective in single-turn tasks, but becomes inaccurate in long-horizon, tool-augmented interactions, where $V_\pi(s_t)$ can vary dramatically across states: for instance, after a successful search, the agent is much more likely to find the correct final answer, whereas after a failed search, its chance of success may drop sharply.

To further analyze how this mismatch changes with the interaction horizon, we conduct a preliminary multi-turn Search experiment, where states are grouped by identical environment observations to estimate $V_\pi(s_t)$. As shown in Figure~\ref{fig:state_diff}, as the interaction horizon increases, the mismatch $(V_\pi(s_t)-\bar V_\pi(s_0))^2$ can grow large, directly weakening the variance-reduction effect of the baseline by Theorem~\ref{thm:optimal_baseline} and, in extreme cases, leading to higher gradient variance than using no baseline at all.

\begin{figure}[h]
    \centering

    \includegraphics[width=\linewidth]{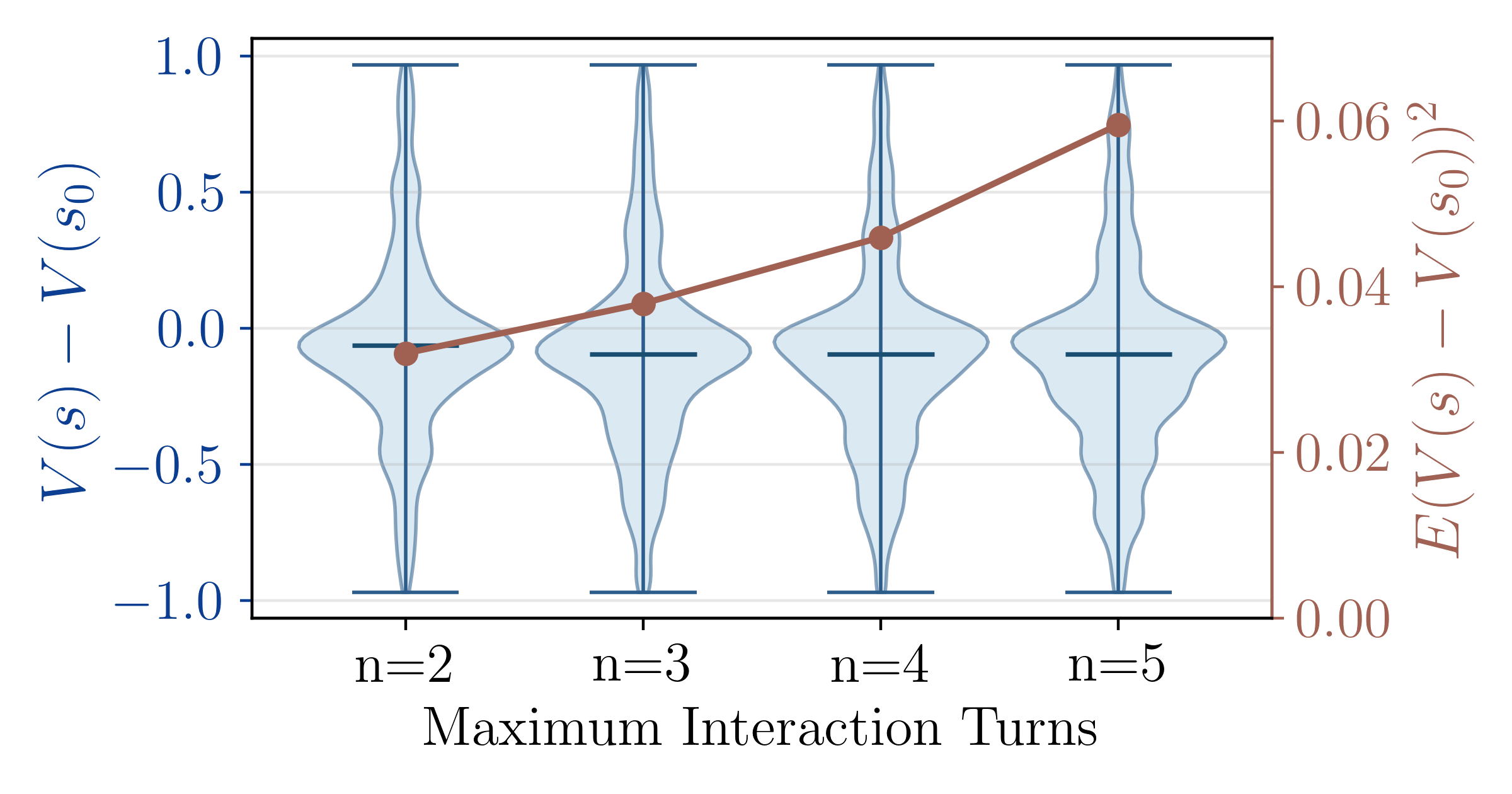}
    \caption{Multi-turn Search experiment where states are grouped by identical environment observations. As the maximum interaction turns $n$ grows, the mismatch between $V_\pi(s_t)$ and the fixed baseline $\bar V_\pi(s_0)$ can become increasingly severe.}
    \label{fig:state_diff}

\end{figure}

\vspace{-0.1em}
\begin{AIbox}{Summary}

As the interaction horizon increases, the variance of RL training can grow substantially, while existing critic-free methods struggle to reduce it and may even exacerbate it. This motivates the need for a estimator that remains low-variance in long-horizon interactions.
\end{AIbox}

\begin{figure*}[!t]
  \centering
  \includegraphics[width=0.84\textwidth]{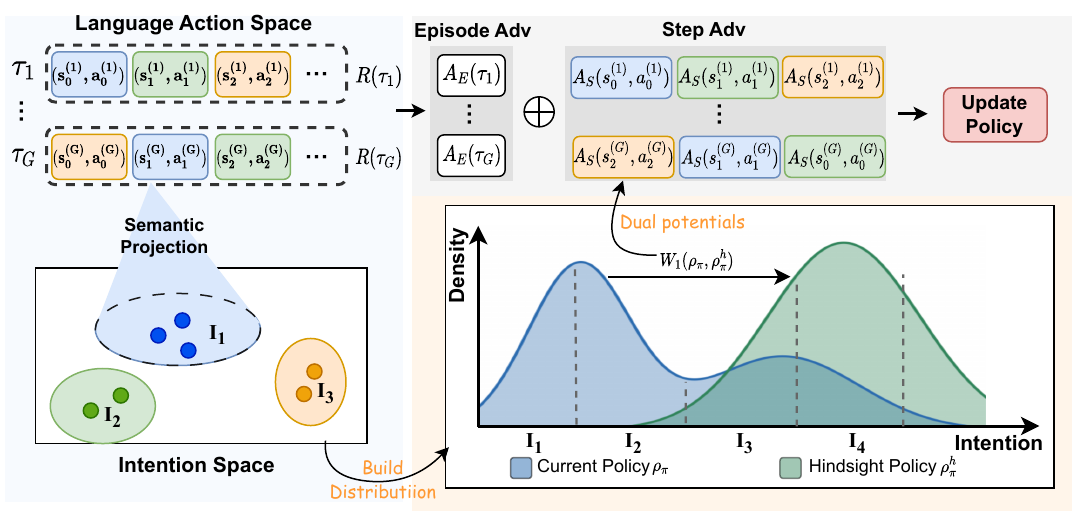}
  \caption{Illustration of HPO. HPO performs semantic projection and compares the policy distribution with the hindsight distribution in the intent space. By solving the Wasserstein distance between the two distributions, it obtains step-level advantages, which are combined with episode-level advantages to update the policy. }
  \label{fig:method_figure}
\end{figure*}

\section{Training LLM Agents with HPO}
\label{sec:hpo_method}
In this section, we introduce a novel policy gradient estimator. Unlike Monte Carlo estimators defined over discrete action spaces, it constructs learning signals in a semantically aggregated intent space. By sharing statistical information at the semantic level, it improves statistical efficiency and substantially reduces estimation variance.

\subsection{Policy Gradient from the Hindsight Perspective}
It is well known that RL becomes challenging in large action spaces. Compared with single-turn tasks, a $T$-turn interaction induces an action-sequence space whose size scales as $|\mathcal{A}|^T$, growing exponentially with the interaction horizon. For language agents, this difficulty is further amplified because each action is a natural-language output, which can itself be viewed as a joint choice over a sequence of tokens and thus induces an enormous action space.

To analyze the trajectory distribution, we consider their induced state--action occupancy measures over pairs $(s,a)$, and define the hindsight distribution and show that the policy gradient can be equivalently reformulated from the perspective of the hindsight trajectory distribution.

\begin{definition}[Hindsight Policy Distribution]
\label{def:hindsight_policy}
For any state $s$, action $a$, and return value $z$, let $h_z^\pi(a \mid s)$ denote the conditional probability that the first action equals $a$ when sampling a trajectory from $\pi$ starting at $s$, given that the realized return equals $z$. In the binary-reward setting considered in this work, we take $z=1$ and define the corresponding hindsight occupancy measure as
\begin{equation}
\label{eq:hindsight_ds}
\rho_\pi^h(s,a)
=
\sum_{t=0}^{\infty}
\mathbb{P}_\pi
\!\left(
(s_t, a_t) = (s,a)
\;\middle|\;
\sum_{t'=t}^{\infty} r_{t'} = 1
\right).
\end{equation}
\end{definition}

Intuitively, in the binary-reward setting, this distribution is obtained by retaining only state-action pairs from trajectories that eventually succeed. For continuous-reward settings, we provide the corresponding distribution definition in Appendix~\ref{app:general_hindsight}.

\begin{lemma}
\label{lem:grad}
Consider the objective of policy optimization $\mathcal{J}(\theta)=\mathbb{E}_{\tau \sim \pi}[\sum_{t\ge 0}r_t]$. Its policy gradient admits the following two equivalent forms:
\begin{align}
    \nabla_\theta &\mathcal{J}(\theta) = \mathbb{E}_{(s,a)\sim\rho_{\pi}}[Q_\pi(s,a) \nabla_\theta\log \pi_\theta (a\mid s) ] \\
    & =\mathbb{E}_{(s,a)\sim\rho_\pi}[-\frac{\delta KL(\rho^h_\pi || \rho_\pi)}{\delta \rho_\pi(s,a)}\nabla_\theta \log \pi_\theta (a\mid s)]
    \label{eq:kl_policy}
\end{align}
where $-\frac{\delta KL(\rho^h_\pi || \rho_\pi)}{\delta \rho_\pi(s,a)}=\frac{\rho^h_\pi(s,a)}{\rho_\pi(s,a)}$ denotes the point-wise variational derivative of $KL(\rho^h_\pi || \rho_\pi)$ with respect to $\rho_\pi(s,a)$.
\end{lemma}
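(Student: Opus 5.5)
The plan is to prove the two equalities separately. The first is the standard policy gradient theorem in occupancy-measure form. The second follows by computing the variational derivative of the KL divergence and then showing that the likelihood ratio $\rho^h_\pi(s,a)/\rho_\pi(s,a)$ is proportional to $Q_\pi(s,a)$, using the binary-reward structure and Bayes' rule.

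For the first form, I would write $\mathcal{J}(\theta)=\mathbb{E}_{\tau}[\sum_t r_t]$ and apply the log-derivative trick to the trajectory density. Transition probabilities do not depend on $\theta$, so $\nabla_\theta \log p_\theta(\tau)=\sum_t \nabla_\theta\log\pi_\theta(a_t\mid s_t)$. Causality (past rewards are uncorrelated with future score terms, since $\mathbb{E}[\nabla\log\pi(a_t\mid s_t)\mid s_t]=0$) lets each score term be paired only with the reward-to-go. Conditioning on $(s_t,a_t)$ turns that reward-to-go into $Q_\pi(s_t,a_t)$. Summing over $t$ and regrouping by $(s,a)$ gives $\sum_{s,a}\sum_t \mathbb{P}(s_t=s,a_t=a)\,Q_\pi(s,a)\nabla\log\pi(a\mid s)$. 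This is the claimed expectation under $\rho_\pi$, read as the unnormalized occupancy since $\gamma=1$. I would note this normalization convention explicitly, because with $\gamma<1$ the factor $(1-\gamma)$ only rescales the gradient.

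For the second form, the variational derivative of $KL(\rho^h\|\rho)=\sum \rho^h\log(\rho^h/\rho)$ with respect to $\rho(s,a)$, holding $\rho^h$ fixed, is $-\rho^h(s,a)/\rho(s,a)$. This gives the stated identity directly. The remaining work is to show
\[
\frac{\rho^h_\pi(s,a)}{\rho_\pi(s,a)} = \frac{Q_\pi(s,a)}{c}
\]
for a constant $c$, or $Q_\pi$ up to a state-only term absorbed as a baseline. Since $\rho^h$ is only defined up to normalization, I would state precisely that the equality holds up to a positive constant (or baseline), which leaves the gradient direction unchanged.

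For each $t$, Bayes' rule gives
\[
\mathbb{P}\big((s_t,a_t)=(s,a)\mid G_t=1\big)=\frac{\mathbb{P}(s_t=s,a_t=a)\,\mathbb{P}(G_t=1\mid s,a)}{\mathbb{P}(G_t=1)},
\]
where $G_t=\sum_{t'\ge t}r_{t'}$. With binary rewards, $\mathbb{P}(G_t=1\mid s,a)=Q_\pi(s,a)$, using the Markov property so that the conditional does not depend on $t$.

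The main obstacle is the denominator $\mathbb{P}(G_t=1)$, which depends on $t$. Summing over $t$ then does not directly reproduce $\rho_\pi(s,a)Q_\pi(s,a)$ up to a single constant. I would resolve this in one of two ways. One option is to observe that in the episodic sparse-reward setting a trajectory succeeds iff its terminal reward is $1$, so $G_t=1$ is the same event $\{R(\tau)=1\}$ for all $t$ before termination. The denominator is then the constant $\mathbb{P}(R=1)=\mathcal{J}(\theta)$, giving
\[
\rho^h_\pi=\frac{\rho_\pi Q_\pi}{\mathcal{J}}.
\]
Alternatively, if one is being careful about steps after termination, I would adopt the normalization convention in which the ratio is defined per time step.

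Either way, substituting back gives $\mathbb{E}_{\rho_\pi}[(\rho^h_\pi/\rho_\pi)\nabla\log\pi]=\nabla\mathcal{J}/\mathcal{J}$. This is a positive rescaling of the first form, so the proof would state the equivalence as holding up to this positive factor, or with $\rho^h_\pi$ taken unnormalized as the joint probability of the pair and success, which makes it exact.
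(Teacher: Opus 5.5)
Your proof has the same skeleton as the paper's. Both apply the policy gradient theorem, identify $\rho^h_\pi/\rho_\pi$ with $Q_\pi/Z$ for a positive scalar $Z$, and note that $Z$ does not change the gradient direction. The paper, like you, therefore only gets equality up to this factor.

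The difference is that the paper's proof never uses Definition~\ref{def:hindsight_policy}. It sets $Z=\sum_{s,a}\rho_\pi(s,a)Q_\pi(s,a)$ and simply \emph{defines} $\rho^h_\pi:=\rho_\pi Q_\pi/Z$. Your Bayes computation appears only in Appendix~\ref{app:general_hindsight}, where a single $1/Z$ is pulled out of the sum over $t$ without comment. Your route does more than the paper's, as do your explicit variational-derivative computation and your remark on the $\gamma=1$ occupancy normalization. The $t$-dependent denominator you flag is exactly the step the paper glosses over.

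Your first fix, however, does not resolve it as stated. With variable episode length $T$ (the agent may stop before the turn limit), $\{G_t=1\}=\{R(\tau)=1\}\cap\{T>t\}$. Hence $\mathbb{P}(G_t=1)=\mathbb{P}(R=1,\,T>t)$, which is nonincreasing in $t$. It equals the constant $\mathcal{J}(\theta)$ only if successful episodes never terminate early, e.g.\ under a fixed horizon.

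Otherwise, Definition~\ref{def:hindsight_policy} gives $\sum_{s,a}\rho^h_\pi(s,a)\nabla_\theta\log\pi_\theta(a\mid s)=\sum_t w_t\,\mathbb{E}\big[Q_\pi(s_t,a_t)\nabla_\theta\log\pi_\theta(a_t\mid s_t)\big]$, with step-dependent weights $w_t=1/\mathbb{P}(R=1,T>t)$. This reweights the per-step terms and is in general not a scalar multiple of $\nabla_\theta\mathcal{J}$. The ``per time step'' normalization has the same defect, so ``either way'' overstates.

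The convention you mention last, $\rho^h_\pi(s,a)=\sum_t\mathbb{P}\big((s_t,a_t)=(s,a),\,G_t=1\big)=\rho_\pi(s,a)Q_\pi(s,a)$, makes the identity exact with no horizon assumption. It is also the unnormalized measure the paper actually uses in Eq.~\eqref{eq:hindsight_intent}. Lead with that, or state the fixed-horizon assumption explicitly.
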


A detailed proof is provided in Appendix~\ref{app:proof_lem42}. Lemma~\ref{lem:grad} provides an alternative interpretation of policy gradients from the perspective of the hindsight distribution: Monte Carlo policy-gradient methods can be viewed as comparing the current occupancy $\rho_\pi$ with the hindsight occupancy $\rho_\pi^h$ through the point-wise ratio $\rho_\pi^h(s,a)/\rho_\pi(s,a)$.

This point-wise comparison treats different state--action pairs as unrelated atoms. Such treatment can be effective in traditional RL problems with small action spaces, where the same state--action pair can be revisited frequently. However, in long-horizon language-agent training, the number of possible atoms grows with both the per-step language action space and the interaction horizon. Semantically similar actions with different surface forms are counted as distinct atoms and cannot share statistical evidence, making value estimation for language-agent actions highly inefficient.

This suggests that reducing the effective estimation space is necessary for obtaining stable learning signals. Although the action space of LLMs is exponentially large, the space of intents they can express is significantly lower-dimensional. Many distinct actions are semantically similar and lead to comparable outcomes; for example, \textit{Who is the president of the United States?} and \textit{I want to know who the U.S. president is} correspond to different language actions but express nearly identical underlying intent.

Based on this insight, the following sections introduce how to leverage the semantics of language actions to reduce the effective size of the action space and obtain more stable value estimation.

\subsection{Intent Space Construction}
\label{Sec:intention_space}
Given a state space $\mathcal{S}$ and an action space $\mathcal{A}$, each state–action pair $(s,a)$ is embedded into a $d$-dimensional Hilbert semantic space via a pretrained encoder $\Phi:\mathcal{S}\times\mathcal{A}\rightarrow\mathbb{R}^d$, where the Euclidean distance serves as the metric of the embedding space. In this semantic space, interactions that are lexically different but semantically equivalent are mapped to nearby regions.

In the binary-reward setting, the policy distribution $\rho_\pi$ and hindsight distribution $\rho_\pi^h$ are constructed as empirical distributions over embedded state–action pairs in the intent space.
\begin{align}
\label{eq:hindsight_intent}
\rho_\pi
= &
\sum_{t=0}^{\infty}
\mathbb{E}_{ \pi}
\!\left[
\delta_{\Phi(s_t,a_t)}
\right], \nonumber \\
\rho_\pi^h
= &
\sum_{t=0}^{\infty}
\mathbb{E}_{ \pi}
\!\left[
\mathbf{1}\!\left(\sum_{t'=t}^{\infty} r_{t'} = 1\right)
\;\delta_{\Phi(s_t,a_t)}
\right],
\end{align}
where \(\delta_{\Phi(s_t,a_t)}\) denotes the Dirac measure centered at the intent embedding \(\Phi(s_t,a_t)\).

\paragraph{Data for constructing the hindsight distribution.}
Although constructing the hindsight distribution from on-policy rollouts yields a closer approximation to the on-policy value function \(Q_\pi\), it suffers from unstable estimation due to fluctuating successful rollouts. Alternatively, using an offline set of successful trajectories obtained via rejection sampling from the initial policy provides a more stable construction of the hindsight distribution, particularly in sparse-reward scenarios, where on-policy rollouts rarely yield successful trajectories. We refer to these two variants as \textbf{online HPO} and \textbf{offline HPO}, respectively.

\subsection{Hindsight Policy Optimization}
Compared with existing RL estimators that use KL-induced point-wise estimates in the discrete state--action space, we seek a new value estimator that preserves the optimal hindsight policy while reducing value-estimation fluctuations among semantically similar language actions. In the intent space, this desideratum can be formalized as the following constrained scoring problem.

\begin{proposition}[Intent value estimatior]
\label{prop:wasserstein_dual_value}
Let $\mathcal{Z}=\Phi(\mathcal{S}\times\mathcal{A})$ be the intent space equipped with metric $d$, and let $\rho_\pi$ and $\rho_\pi^h$ denote the current and hindsight occupancy measures on $\mathcal{Z}$. A scoring function that identifies hindsight-optimal trajectories among trajectories under the semantic smoothness constraint $\|v\|_L\le 1$ is given by
\begin{equation}
\sup_{\|v\|_L\le 1}
\left(
\mathbb{E}_{z\sim\rho_\pi^h}[v(z)]
-
\mathbb{E}_{z\sim\rho_\pi}[v(z)]
\right).
\end{equation}
Any optimizer $v^*$ is a Kantorovich potential of the 1-Wasserstein distance.
\end{proposition}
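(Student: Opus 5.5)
The plan is to split the proposition into its two claims and treat each in turn. The first is that the supremum is a well-posed scoring problem whose maximizers separate hindsight-optimal mass from current-policy mass. The second is that every maximizer is a Kantorovich potential for $W_1(\rho_\pi^h,\rho_\pi)$.

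\textbf{Step 1: normalize the measures.} As written in Eq.~\eqref{eq:hindsight_intent}, $\rho_\pi$ and $\rho_\pi^h$ are occupancy measures and need not have equal total mass. For an unconstrained $v$ with $\|v\|_L\le 1$, adding a constant $c$ changes the objective by $c(\rho_\pi^h(\mathcal{Z})-\rho_\pi(\mathcal{Z}))$, so the supremum would be $+\infty$ unless the masses agree. I would therefore first normalize both to probability measures on $\mathcal{Z}$. This means dividing by the expected horizon for $\rho_\pi$, and by the corresponding success-weighted mass for $\rho_\pi^h$, consistent with Definition~\ref{def:hindsight_policy} as a conditional distribution. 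I would also assume finite first moments, which holds automatically when $\mathcal{Z}$ is bounded, e.g.\ for normalized encoder embeddings $\Phi$. After this step the objective is invariant under $v\mapsto v+c$, so we may fix $v(z_0)=0$.

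\textbf{Step 2: existence of a maximizer.} With the normalization $v(z_0)=0$, the class of $1$-Lipschitz functions satisfies $|v(z)|\le d(z,z_0)$. This class is equicontinuous and pointwise bounded. By Arzel\`a--Ascoli on compact subsets, or directly when $\mathcal{Z}$ is compact, a maximizing sequence has a locally uniformly convergent subsequence. Dominated convergence, using the integrable envelope $d(\cdot,z_0)$, then shows the objective passes to the limit, and the limit is still $1$-Lipschitz. So a maximizer $v^*$ exists.

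\textbf{Step 3: identification with the Wasserstein potential.} By the Kantorovich--Rubinstein duality recalled in the Preliminaries,
\[
\sup_{\|v\|_L\le1}\Bigl(\mathbb{E}_{\rho_\pi^h}[v]-\mathbb{E}_{\rho_\pi}[v]\Bigr)=W_1(\rho_\pi^h,\rho_\pi).
\]
Any $v^*$ attaining the supremum is, by the definition given after the dual formula, a Kantorovich potential; note $-v^*$ is one for the reversed ordering. For the "scoring" interpretation I would use complementary slackness. If $\gamma^*$ is an optimal coupling, then
\[
\int d\,d\gamma^*=\int \bigl(v^*(x)-v^*(y)\bigr)\,d\gamma^*(x,y),
\]
and since $v^*(x)-v^*(y)\le d(x,y)$ pointwise, equality holds $\gamma^*$-a.e. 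So $v^*$ increases at unit rate along transport from current-policy mass toward hindsight (successful) mass. This is what makes it rank intents by proximity to hindsight-optimal behaviour. The Lipschitz constraint is exactly the semantic smoothness: nearby intents receive nearby scores.

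The main obstacle is Step 1 rather than the duality. The paper's unnormalized occupancy measures make the supremum literally infinite, and "identifies hindsight-optimal trajectories" is informal. I would handle this by stating the normalization explicitly and reading the identification claim through the slackness property above.
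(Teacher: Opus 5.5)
Your proposal is correct, and its core (Step 3) is exactly the paper's argument. The paper gives no separate proof; it simply notes, in the sentence after the proposition, that the objective is the Kantorovich--Rubinstein dual of $W_1(\rho_\pi^h,\rho_\pi)$ from the Preliminaries, so any optimizer is a Kantorovich potential by definition. Your normalization step (the paper's unnormalized occupancy measures do have unequal total mass, which would make the supremum $+\infty$), the existence argument, and the complementary-slackness reading of ``identifies hindsight-optimal trajectories'' are valid additions that the paper leaves implicit.
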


We find that the intent value evaluator is exactly the dual problem of the Wasserstein distance between $\rho_\pi^h$ and $\rho_\pi$ introduced in Section~\ref{k_potential}. Under this view, the resulting policy gradient has the same form as Eq.~\eqref{eq:kl_policy}, except that the KL divergence is replaced by the Wasserstein distance.

Let $f^*$ denote the Kantorovich potential for $W_1(\rho_\pi,\rho_\pi^h)$. The corresponding variational gradient gives the following policy-gradient form:
\begin{equation}
    \nabla_\theta \mathcal{J}(\theta) =\mathbb{E}_{(s,a)\sim\rho_\pi}[-\frac{\delta W_1(\rho_\pi || \rho^h_\pi)}{\delta \rho_\pi(s,a)}\nabla_\theta \log \pi_\theta (a\mid s)]
\end{equation}
where $\frac{\delta W_1(\rho_\pi || \rho^h_\pi)}{\delta \rho_\pi(s,a)}=f^*(s,a)$ denotes the point-wise variational derivative of $W_1(\rho_\pi || \rho^h_\pi)$ with respect to $\rho_\pi(s,a)$, i.e., the Kantorovich potential in Section~\ref{k_potential}.

$f^*(s,a)$ possesses several desirable properties. Notably, it satisfies the 1-Lipschitz continuity condition in the intent space, implying that behaviors mapped to similar intents yield close potential values. Furthermore, it is also computationally efficient~\cite{ambrosio2005gradient,alvarez2021dataset} in that $f^*(s,a)$ can be recovered directly from the dual solution of the Wasserstein distance calculation at virtually no additional cost~\cite{alvarez2021dataset,just2023lava}.

As proved in Section~\ref{sec:the_ana},
$-\frac{\delta W_1(\rho_\pi \,\|\, \rho^h_\pi)}{\delta \rho_\pi(s,a)} = -f^*(s,a)$,
which is defined in the intent space, yields a learning signal with lower variance, at the cost of introducing a certain degree of bias.
In contrast, the learning signal $-\frac{\delta \mathrm{KL}(\rho^h_\pi \,\|\, \rho_\pi)}{\delta \rho_\pi(s,a)}=\frac{\rho^h_\pi(s,a)}{\rho_\pi(s,a)}$, which is defined over the discrete action space, is unbiased but typically exhibits much higher variance.

Since GRPO is currently one of the most widely adopted algorithms in RLVR, we choose it as the base optimization algorithm and apply the proposed hindsight distribution to its policy update. Concretely, the optimization terms induced by the KL divergence and the Wasserstein distance admit intuitive interpretations as \emph{episode-level} and \emph{step-level} advantages in GRPO, respectively.

\paragraph{KL term (Episode-level advantage).}
The KL-based optimization term corresponds to a Monte Carlo estimate of $Q_\pi(s,a)$.
Therefore, we directly reuse the episode-level advantage originally defined in GRPO as the learning signal for the KL term:
\begin{equation}
A_E(\tau_i)
=
\frac{R_i - \operatorname{mean}\!\left(\{R_i\}_{i=1}^{G}\right)}
{\operatorname{std}\!\left(\{R_i\}_{i=1}^{G}\right)} ,
\end{equation}

\paragraph{Wasserstein term (Step-level advantage).}
For the Wasserstein-based optimization term, we obtain the Kantorovich potential $-\frac{\delta W_1(\rho_\pi \,\|\, \rho^h_\pi)}{\delta \rho_\pi(s,a)} =-f^*(s_t,a_t)$ defined in the intent space.
To ensure compatibility in scale with the episode-level advantage and to enable a meaningful interpolation between the two signals, we normalize the potential into a step-level advantage:
\begin{equation}
A_S(s_t,a_t)
=
\frac{-f^*(s_t,a_t)-\text{mean}(-f^*)}
{\operatorname{std}(f^*)} .
\end{equation}

\paragraph{Final advantage formulation.}
We define the final advantage as a weighted interpolation between the episode-level and step-level advantages, to achieve a better bias–variance trade-off:
\begin{equation}
\label{eq:adv_hpo}
\hat A(s^{(i)}_t, a^{(i)}_t)
=
\frac{1}{1+\omega}
\left[
A_E(\tau_i)
+
\omega \cdot A_S(s^{(i)}_t, a^{(i)}_t)
\right],
\end{equation}
where $\omega > 0$ controls the relative contribution of the step-level hindsight signal. By tuning $\omega$, this formulation provides an explicit mechanism to trade off variance reduction against bias, enabling more stable and effective long-horizon optimization.

Let $\mathcal{D}$ denote the training prompt distribution, and let $q \sim \mathcal{D}$ denote a sampled task prompt. Then the policy optimization objective of HPO is:
\begin{align}
\label{eq:hpo}
    \mathcal{J}&_{\mathrm{HPO}}(\theta)=\mathbb{E}_{q \sim \mathcal{D},\left\{\tau^{(i)}\right\}_{i=1}^{G} \sim \pi_{\mathrm{old}}(\cdot \mid q)} \frac{1}{G} \sum_{i=1}^{G} \frac{1}{\left|\tau^{(i)}\right|} \sum_{t=1}^{\left|\tau^{(i)}\right|} \nonumber\\
&\min \left[w_{t}^{(i)}(\theta) \hat{A}_{t}^{(i)}, \operatorname{clip}\left(w_{t}^{(i)}(\theta), 1 \pm \epsilon\right) \hat{A}_{t}^{(i)}\right] ,
\end{align}
where $w_t^{(i)}=\frac{\pi_\theta(y_t^{(i)}|q,y_{<t}^{(i)})}{\pi_\text{old}(y_t^{(i)}|q,y_{<t}^{(i)})}$is the importance sampling ratio between the current policy and the behavior policy.

The overall workflow of \textsc{HPO} is illustrated in Figure~\ref{fig:method_figure}, and the complete algorithm is summarized in Algorithm~\ref{alg:grpo}.
\begin{algorithm}[ht]
  \caption{Hindsight Policy Optimization}
  \label{alg:grpo}
  \begin{algorithmic}
    \STATE {\bfseries Input:} initial policy model $\pi_{\theta_{\text{init}}}$; task prompts $\mathcal{D}$; hyperparameters $\varepsilon, \beta, \mu, \omega$
    \STATE policy model $\pi_\theta \leftarrow \pi_{\theta_{\text{init}}}$
    \FOR{step $= 1, \ldots, M$}
        \STATE Sample a batch $\mathcal{D}_b$ from $\mathcal{D}$
        \STATE Update the old policy model $\pi_{\theta_{\text{old}}} \leftarrow \pi_\theta$
        \STATE Sample outputs $\{o_i\}_{i=1}^{G} \sim \pi_{\theta_{\text{old}}}(\cdot \mid q)$ for each $q \in \mathcal{D}_b$
        \STATE Compute rewards $\{r_i\}_{i=1}^{G}$ for each output $o_i$
        \STATE Get embeddings by encoding $(s,a)\in\{\tau_i\}_{i=1}^G$ with $\Phi$
        \STATE Construct distributions $\rho_\pi$, $\rho^h_\pi$ using Eq.~\eqref{eq:hindsight_intent}
        \STATE Get ${f^*}$ from computing $\mathcal{W}(\rho_\pi,\rho^h_\pi)$
        \STATE Compute $\hat{A}^{(i)}_t$ according to Eq.~\eqref{eq:adv_hpo}
        \FOR{iteration $= 1, \ldots, \mu$}
            \STATE Update the policy $\pi_\theta$ with the objective in Eq.~\eqref{eq:hpo}.
        \ENDFOR
    \ENDFOR
    \STATE {\bfseries Output:} $\pi_\theta$
  \end{algorithmic}
\end{algorithm}

\subsection{Theoretical Analysis}
\label{sec:the_ana}
\begin{lemma}
Let $\rho_\pi$ be a state-action occupancy measure, and define $D=\sup_{(s,a),(s',a')\in \rho_\pi} d\big((s,a),(s',a')\big)$. Let $f^*(s,a)$ denote the $-\frac{\delta W_1(\rho_\pi || \rho^h_\pi)}{\delta \rho_\pi(s,a)}$, and $g(s,a)$ denote the $-\frac{\delta KL(\rho^h_\pi || \rho_\pi)}{\delta \rho_\pi(s,a)}$, then we have
\begin{align}
    &\mathrm{Var}_{\rho_\pi}(f^*(s,a)) \le D^2/4. \\
        &\mathrm{Var}_{\rho_\pi}(g(s,a))
=\int \frac{\rho_\pi^h(s,a)^2}{\rho_\pi(s,a)}\,d(s,a) - 1
= \chi^2(\rho_\pi^h\|\rho_\pi).
\end{align}
where $\chi^2(\cdot\|\cdot)$ denotes the chi-squared divergence.

\end{lemma}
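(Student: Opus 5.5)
Both bounds are treated as statements about a single random variable under the distribution $\rho_\pi$. This requires $\rho_\pi$ and $\rho_\pi^h$ to be probability measures. So, as in the chi-squared identity we are asked to prove, the occupancy measures are normalized, and $\rho_\pi^h$ is assumed absolutely continuous with respect to $\rho_\pi$.

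\textbf{Bound on $f^*$.} The plan is to combine Lipschitz continuity with Popoviciu's inequality. By Proposition~\ref{prop:wasserstein_dual_value} and the dual formulation in Section~\ref{k_potential}, the Kantorovich potential $f^*$ is $1$-Lipschitz in the intent metric $d$. Hence for any two points $(s,a),(s',a')$ in the support of $\rho_\pi$,
\[
|f^*(s,a)-f^*(s',a')|\le d\big((s,a),(s',a')\big)\le D.
\]
So $f^*$ restricted to the support takes values in an interval $[m,m+D]$, where $m=\inf f^*$ on the support; this infimum is finite because $D<\infty$. Popoviciu's inequality then gives $\mathrm{Var}(X)\le (M-m)^2/4$ for any random variable with $X\in[m,M]$. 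The one-line justification is
\[
\mathrm{Var}(X)\le \mathbb{E}\big[(X-c)^2\big]\le \big((M-m)/2\big)^2 \quad\text{with } c=(m+M)/2.
\]
This yields $\mathrm{Var}_{\rho_\pi}(f^*)\le D^2/4$. The potential is defined only up to an additive constant, but constants do not affect the variance, so the bound holds for every choice.

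\textbf{Identity for $g$.} This is a direct computation. By Lemma~\ref{lem:grad}, $g=\rho_\pi^h/\rho_\pi$, understood as a Radon--Nikodym derivative. First,
\[
\mathbb{E}_{\rho_\pi}[g]=\int \frac{\rho_\pi^h}{\rho_\pi}\,\rho_\pi=\int\rho_\pi^h=1.
\]
Second,
\[
\mathbb{E}_{\rho_\pi}[g^2]=\int \frac{(\rho_\pi^h)^2}{\rho_\pi}.
\]
Therefore
\[
\mathrm{Var}_{\rho_\pi}(g)=\int \frac{(\rho_\pi^h)^2}{\rho_\pi}-1.
\]
Expanding $\int (\rho_\pi^h-\rho_\pi)^2/\rho_\pi$ gives exactly the same quantity, which is the definition of $\chi^2(\rho_\pi^h\|\rho_\pi)$.

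\textbf{Main obstacle.} The delicate part is bookkeeping rather than any inequality. The occupancy measures in Eq.~\eqref{eq:hindsight_ds} and Eq.~\eqref{eq:hindsight_intent} are unnormalized sums over $t$. If $\rho_\pi^h$ has total mass $p\neq 1$ relative to $\rho_\pi$, the correct identity is $\int (\rho_\pi^h)^2/\rho_\pi - p^2$. We therefore make the normalization of both measures explicit at the outset, so that the ``$-1$'' in the statement is justified. We also note that $D<\infty$ is needed for the first bound to be meaningful.
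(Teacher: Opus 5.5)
Your proposal is correct and follows the paper's proof essentially step for step. For the $W_1$ term you use $1$-Lipschitz continuity of the Kantorovich potential plus Popoviciu's inequality, and for the KL term you compute $\mathbb{E}_{\rho_\pi}[g]=1$ and $\mathbb{E}_{\rho_\pi}[g^2]=\int (\rho_\pi^h)^2/\rho_\pi$, then expand $\chi^2$. Your explicit normalization and absolute-continuity assumptions, including the remark that otherwise the subtracted constant would be $p^2$ rather than $1$, make precise what the paper uses implicitly.
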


The lemma highlights a sharp contrast between the learning signals induced by Wasserstein and KL. The $f^*(s,a)$ is 1-Lipschitz over the intent space and thus admits a \textit{variance bound} $\mathrm{Var}_{\rho_\pi}(f^*) \le D^2/4$. In contrast, the $g(s,a)=\rho_\pi^h(s,a)/\rho_\pi(s,a)$ has variance characterized by $\chi^2(\rho_\pi^h\|\rho_\pi)$, which can be \textit{arbitrarily large}.

This contrast explains the instability of KL-based signals in large discrete action spaces. The KL gradient depends on the pointwise ratio $\rho_\pi^h(s,a)/\rho_\pi(s,a)$, whose variance is governed by $\chi^2(\rho_\pi^h|\rho_\pi)$ and can become arbitrarily large when $\rho_\pi(s,a)$ is small. Since discrete actions are treated as isolated atoms, no statistical sharing occurs across actions, making reliable estimation of such ratios sample-inefficient, especially in the presence of rare but important events. In contrast, the Wasserstein signal leverages the geometry of the intent space to compare distributions via mass transport, allowing probability to be matched across nearby intents without pointwise division. This yields a smoother learning signal with variance bounded by the metric diameter $D$, providing a principled justification for \textsc{HPO}'s intent-space formulation.

\begin{table*}[t]
\centering
\small
\setlength{\tabcolsep}{4.5pt}
\renewcommand{\arraystretch}{1.05}
\caption{Performance on SearchQA. Results are averaged over 3 random seeds. \textbf{Compared with other baselines, HPO achieves consistent improvements.}}
\resizebox{0.9\textwidth}{!}{
\begin{tabular}{lccccccccc|c}
\toprule
 & \multicolumn{4}{c}{General QA} & \multicolumn{5}{c}{Multi-Hop QA} & \multirow{2}{*}{Avg.} \\
\cmidrule(lr){2-5}\cmidrule(lr){6-10}
Method & NQ & TriviaQA & PopQA & Avg. & HotpotQA & 2wiki & Musique & Bamboogle & Avg. & \\
\midrule

\multicolumn{11}{l}{\textit{Qwen2.5-3B-Instruct}} \\
Base            & 6.4  & 21.0  & 13.3  & 13.6  & 7.6  &  7.4 & 2.7  & 11.2  & 7.2  & 9.9 \\
Search-R1 & $34.1$ & $54.5$ & $37.8$ & $42.1$ & $32.4$ & $31.9$ & $10.3$ & $26.4$ & $25.3$ & $32.5$ \\
ZeroSearch & $41.4$ & $57.4$ & $44.8$ & $47.9$ & $27.4$ & $30.0$ & $9.8$ & $11.1$ & $19.6$ & $31.7$ \\

PPO    & $36.2_{\pm 1.3}$ & $57.5_{\pm 2.8}$ & $40.7_{\pm 2.8}$ & $44.8_{\pm 1.9}$ & $32.8_{\pm 2.4}$ & $33.0_{\pm 1.3}$ & $11.6_{\pm 1.6}$ & $30.4_{\pm 2.1}$ & $27.0_{\pm 1.6}$ & $34.6_{\pm 1.8}$ \\
GRPO  & $37.2_{\pm 2.4}$ & $56.5_{\pm 3.4}$ & $42.4_{\pm 2.4}$ & $45.4_{\pm 2.0}$ & $32.0_{\pm 3.3}$ & $33.8_{\pm 2.0}$ & $12.3_{\pm 1.0}$ & $28.4_{\pm 1.5}$ & $26.6_{\pm 1.2}$ & $34.7_{\pm 1.5}$ \\
\rowcolor{gray!15}
\textbf{HPO$_{\text{on}}$}
& $\underline{41.2}_{\pm 1.4}$ & $\underline{59.8}_{\pm 2.4}$ & $\underline{44.0}_{\pm 1.8}$ & $\underline{48.3}_{\pm 1.2}$ & $\underline{36.2}_{\pm 1.8}$ & $\underline{34.2}_{\pm 2.2}$ & $\underline{14.4}_{\pm 1.4}$ & $\underline{32.7}_{\pm 3.7}$ & $\underline{29.4}_{\pm 2.0}$ & $\underline{37.5}_{\pm 1.6}$ \\
\rowcolor{gray!15}
\textbf{HPO$_{\text{off}}$}
& $\mathbf{43.8}_{\pm 0.1}$ & $\mathbf{61.1}_{\pm 1.1}$ & $\mathbf{45.1}_{\pm 0.6}$ & $\mathbf{50.0}_{\pm 0.2}$ & $\mathbf{37.6}_{\pm 0.7}$ & $\mathbf{35.1}_{\pm 0.9}$ & $\mathbf{15.5}_{\pm 1.1}$ & $\mathbf{36.0}_{\pm 1.2}$ & $\mathbf{31.1}_{\pm 1.2}$ & $\mathbf{39.2}_{\pm 0.8}$ \\
\midrule

\multicolumn{11}{l}{\textit{Qwen2.5-7B-Instruct}} \\
Base            & 11.9 & 31.1 & 20.3 & 21.1 & 16.5  & 10.9  & 7.0  & 21.6 & 14.0  & 17.0 \\
Search-R1 & $39.3$ & $61.0$ & $39.7$ & $46.7$ & $37.0$ & $\underline{41.4}$ & $14.6$ & $36.8$ & $32.5$ & $38.5$ \\
ZeroSearch & $43.6$ & $61.8$ & $\mathbf{51.5}$ & $52.3$ & $34.6$ & $35.2$ & $18.4$ & $27.8$ & $29.0$ & $39.1$ \\

PPO  & $41.2_{\pm 2.0}$ & $64.0_{\pm 1.7}$ & $42.5_{\pm 2.4}$ & $49.2_{\pm 1.5}$ & $39.5_{\pm 2.1}$ & $40.0_{\pm 1.6}$ & $18.8_{\pm 1.0}$ & $37.6_{\pm 2.7}$ & $34.0_{\pm 1.8}$ & $40.5_{\pm 1.5}$ \\
GRPO               & $42.2_{\pm 4.1}$ & $62.8_{\pm 2.5}$ & $42.5_{\pm 4.0}$ & $49.2_{\pm 3.5}$ & $38.5_{\pm 4.1}$ & $38.7_{\pm 3.2}$ & $17.1_{\pm 3.8}$ & $34.8_{\pm 4.0}$ & $32.3_{\pm 3.8}$ & $39.5_{\pm 3.6}$ \\
\rowcolor{gray!15}
\textbf{HPO$_{\text{on}}$}
& $\underline{48.1}_{\pm 1.7}$ & $\underline{66.4}_{\pm 0.6}$ & $46.3_{\pm 1.8}$ & $\underline{53.6}_{\pm 1.2}$ & $\mathbf{44.4}_{\pm 2.8}$ & $\mathbf{42.3}_{\pm 2.2}$ & $20.3_{\pm 1.0}$ & $\mathbf{40.8}_{\pm 2.8}$ & $\mathbf{37.0}_{\pm 1.5}$ & $\mathbf{44.1}_{\pm 1.3}$ \\
\rowcolor{gray!15}
\textbf{HPO$_{\text{off}}$}
& $\mathbf{48.2}_{\pm 0.6}$ & $\mathbf{67.1}_{\pm 1.1}$ & $\underline{47.9}_{\pm 0.8}$ & $\mathbf{54.4}_{\pm 0.4}$ & $\underline{43.2}_{\pm 0.1}$ & $40.5_{\pm 0.4}$ & $\underline{19.3}_{\pm 0.4}$ & $\mathbf{41.3}_{\pm 1.1}$ & $\underline{36.1}_{\pm 0.2}$ & $\underline{43.9}_{\pm 0.3}$ \\
\bottomrule
\end{tabular}
}

\label{tab:search_result}
\end{table*}

\begin{table}[h]
\centering
\small
\setlength{\tabcolsep}{6pt}
\renewcommand{\arraystretch}{1.05}
\caption{The results on Textcraft benchmark.}
\resizebox{0.75\linewidth}{!}{
\begin{tabular}{lcc}
\toprule
Method
& \multicolumn{2}{c}{Success Rate (\%)} \\
\cmidrule(lr){2-3}
& {\textit{Qwen2.5-3B-Ins}}
& {\textit{Qwen2.5-7B-Ins}} \\
\midrule
Base & 14.0 & 42.0 \\
PPO  & 76.0 & 89.0 \\
GRPO & 81.0 & 87.0 \\
\rowcolor{gray!15}
\textbf{HPO$_{\text{on}}$} & \textbf{84.0} & \textbf{93.0} \\
\rowcolor{gray!15}
\textbf{HPO$_{\text{off}}$} & \textbf{86.0} & \textbf{93.0} \\
\bottomrule
\end{tabular}
}
\label{tab:textcraft}
\end{table}

\section{Experiments}
\subsection{Setup}
\paragraph{Tasks and Benchmarks.} We evaluate HPO on two long-horizon interactive tasks: SearchQA and TextCraft~\cite{prasad2024adapt}. For SearchQA, we consider seven benchmarks, including single-hop QA (NQ~\cite{kwiatkowski2019natural}, TriviaQA~\cite{joshi2017triviaqa}, and PopQA~\cite{mallen2022not}) and multi-hop QA  (HotpotQA~\cite{yang2018hotpotqa}, 2Wiki~\cite{ho2020constructing}, Musique~\cite{trivedi2023interleaving}, and Bamboogle~\cite{press2023measuring}). For TextCraft, we report the average success rate (\%) for each query. Full settings details are provided in Appendix~\ref{sec:exp_dt}.

\paragraph{Models and Baselines.} We use Qwen2.5-3B/7B-Instruct~\cite{qwen2025qwen25technicalreport} as our base models. For HPO, we use Qwen3-Embedding-0.6B as encoder. We compare our approach against a diverse set of strong baselines. Specifically, we include \textbf{PPO}, a widely adopted actor-critic method that relies on an additional value model, as well as  critic-free methods such as \textbf{GRPO}, which estimate advantages over groups of trajectories. For the SearchQA task, we further evaluate recent competitive models trained with different RL algorithms, including \textbf{Search-R1}, \textbf{ZeroSearch}. Full training settings and hyperparameter details are provided in Appendix~\ref{sec:exp_dt}.

\subsection{Main Result}

\paragraph{HPO significantly improves policy performance, especially on multi-hop tasks.} As shown in Table~\ref{tab:search_result}, HPO improves average performance by 4\%-7\% over other baselines. Moreover, the improvements are especially pronounced on multi-hop benchmarks such as HotpotQA and Bamboogle. We analyze that these gains likely stem from HPO’s fine-grained, step-level advantages, which enable more accurate credit assignment by identifying and reinforcing truly valuable actions; this in turn translates into higher final answer accuracy on multi-hop questions.

\paragraph{Offline HPO achieves better performance than  online HPO.}
As shown in Table~\ref{tab:search_result}, offline HPO is more stable, with smaller performance variance across benchmarks. Although online HPO can construct an on-policy hindsight distribution from newly collected rollouts, the number of successful samples used for constructing hindsight distribution fluctuates under stochastic sampling, leading to unstable estimation. Offline HPO instead constructs the hindsight from a fixed offline dataset, and it further mitigates the issue of sparse rewards: when no correct solution is found, both GRPO and online HPO cannot update policy effectively. In contrast, offline HPO can still update by constructing the hindsight from a fixed set of successful trajectories in the offline data, thus providing informative step-level advantages.

\paragraph{HPO consistently improves performance across long-horizon agent tasks.}
Table~\ref{tab:textcraft} reports results on the TextCraft benchmark. HPO achieves consistent improvements in average success rate over all baselines, demonstrating that HPO is effective across different task domains and interaction dynamics, highlighting its robustness and broad applicability to long-horizon agent settings.

\subsection{Training Dynamics}

Figure~\ref{fig:dynamics} shows the training reward dynamics on Qwen2.5-7B. GRPO exhibits unstable long-horizon training with abrupt reward collapses, as it fails to distinguish successful actions from failed ones within trajectories. PPO achieves more stable training by introducing a critic to identify valuable actions, but converges more slowly because the critic requires substantial warm-up to produce reliable value estimates. In contrast, HPO achieves fast and stable convergence by identifying valuable actions without training an additional critic model. Notably, offline HPO demonstrates the fastest early-stage reward growth, which can be attributed to its step-level advantage in identifying correct actions within failed trajectories, thereby enabling meaningful policy updates even in the absence of fully correct samples.

\begin{figure}[h]
    \centering
    \includegraphics[width=0.9\linewidth]{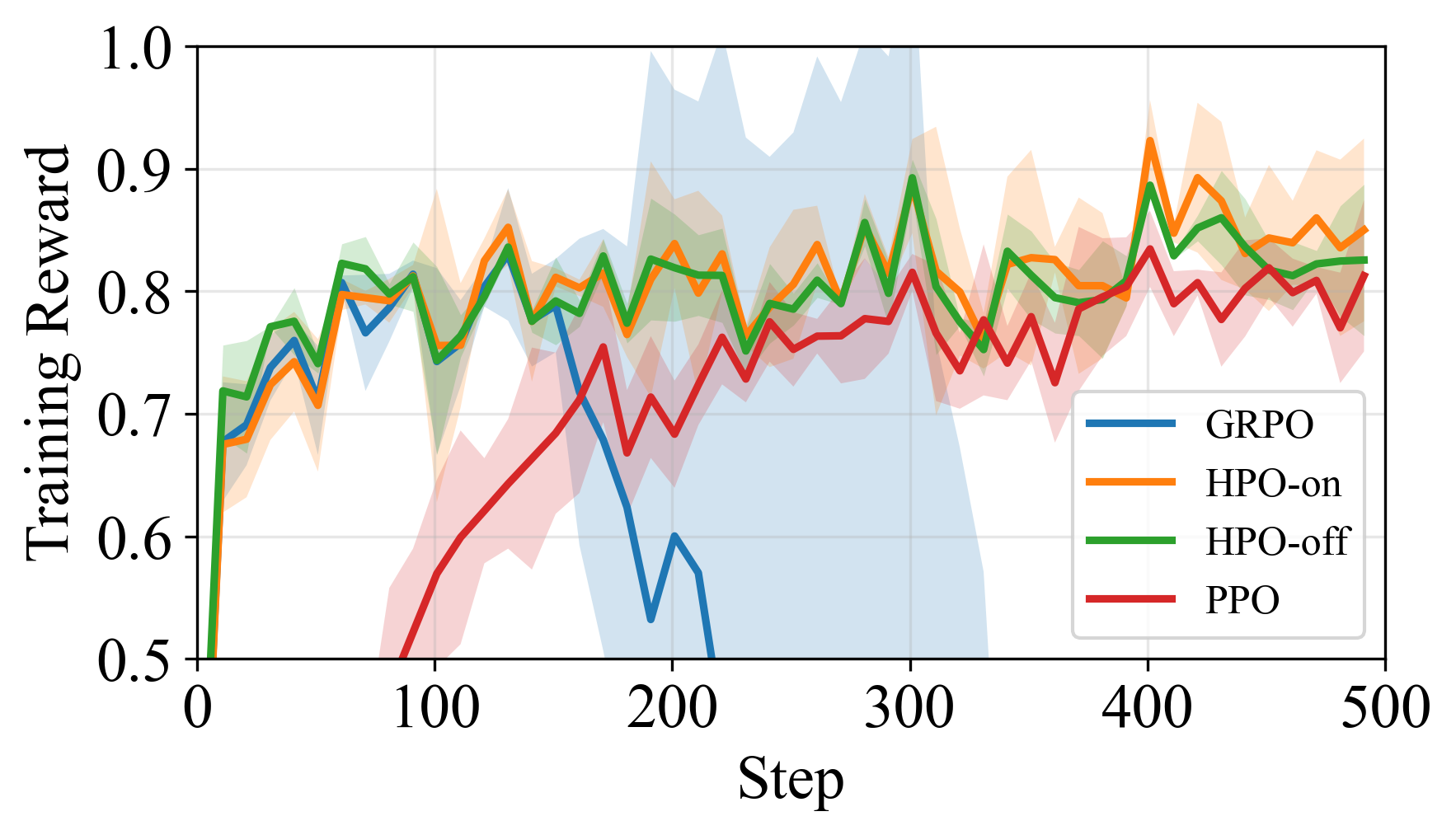}
    \caption{The reward dynamics of HPO and other baseline methods with Qwen2.5-7B over SearchQA, where curves and uncertainty ranges are computed using the mean and standard deviation over three random seeds, respectively. \textbf{GRPO exhibits unstable training, PPO converges more slowly, while HPO achieves fast and stable convergence.}}
    \label{fig:dynamics}
\end{figure}

\subsection{Interpretability Analysis of $A_S$}

\paragraph{$A_S$ alone is sufficient to drive effective policy updates.} To isolate the contribution of $A_E$, we ablate it and re-run off-policy HPO over Qwen2.5-3B. In this setting, policy optimization is driven solely by the learning signal from $A_S$, while outcome rewards are computed only for monitoring and reporting training dynamics. As shown in Figure~\ref{fig:onlyk}, optimizing with $A_S$ alone yields stable and effective policy updates: the monitored training reward increases steadily throughout optimization, without exhibiting reward collapse. The final model is only marginally worse than the standard outcome-reward–optimized model, suggesting that the bias introduced by $A_S$ is limited and practically acceptable. Overall, these results highlight the practical utility of $A_S$, particularly for open-ended tasks where outcome rewards are difficult to specify, or for online settings where outcome evaluation is costly, since $A_S$ can be learned from offline data that has already been evaluated. The \textbf{case study} in the Appendix~\ref{app:case} further confirms that \(A_S\) reliably identifies valuable actions and provides sufficiently informative step-level signals for policy optimization.

\begin{figure}[h]
    \centering
    \includegraphics[width=0.9\linewidth]{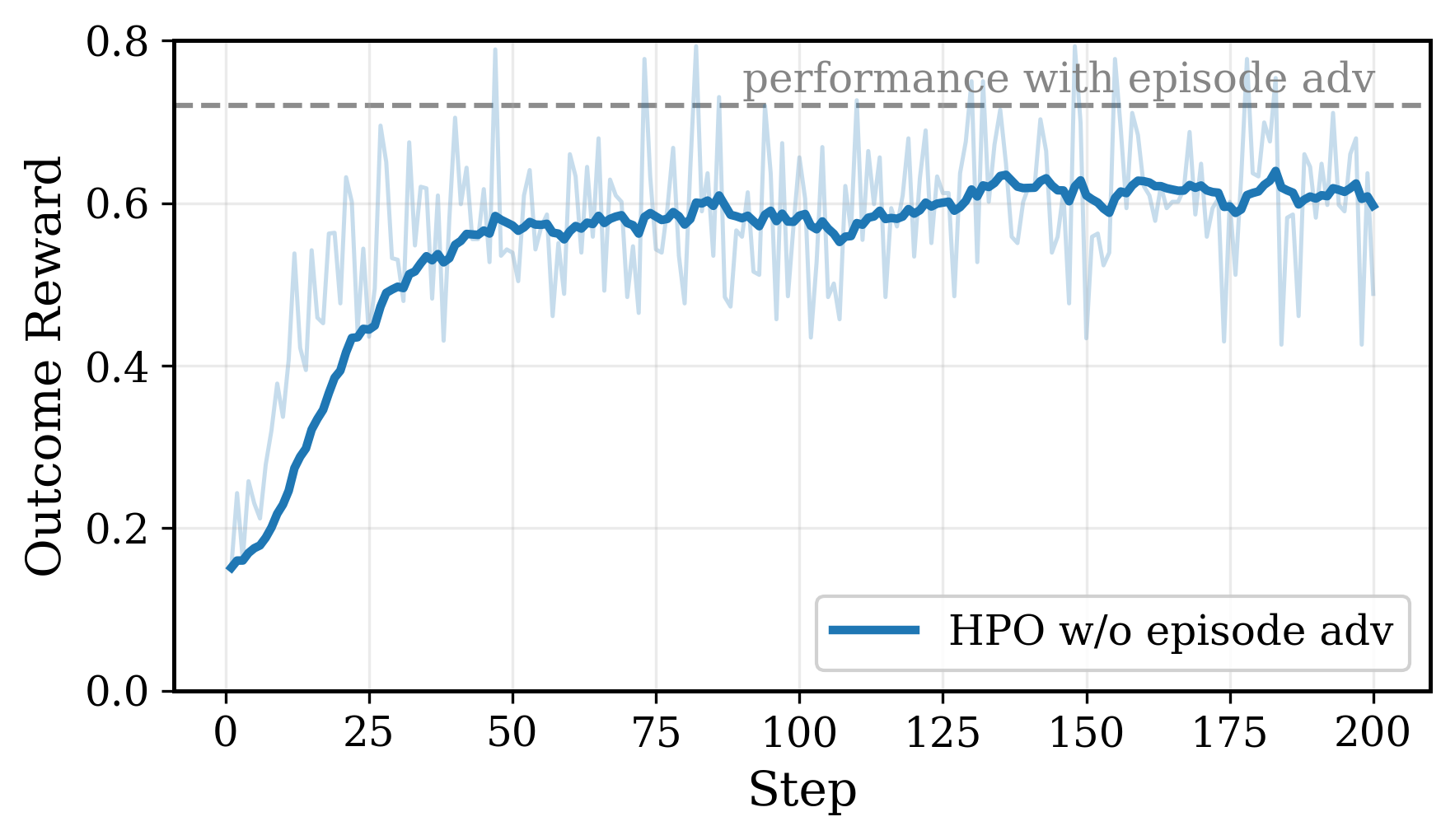}
    \caption{Training reward dynamics of HPO using only step-level advantages with Qwen2.5-3B. The dark blue curve is EMA-smoothed from the raw data. \textbf{Without episode-level rewards, training remains stable and effective, indicating that step-level advantages provide a reliable learning signal.}}
    \label{fig:onlyk}
\end{figure}

\subsection{Efficiency Analysis}
We analyze the computational budget of HPO. HPO shares the same core architecture as GRPO, including multi-turn rollouts, computation of old and reference probabilities, and policy updates. The primary additions introduced by HPO are the step-relative advantage estimation components. To evaluate cost, we train a SearchQA LLM agent using Qwen2.5-7B-Instruct with Qwen3-0.6B-Embedding as the encoder, and record a per-iteration time breakdown.
\begin{figure}[h]
    \centering

    \includegraphics[width=0.9\linewidth]{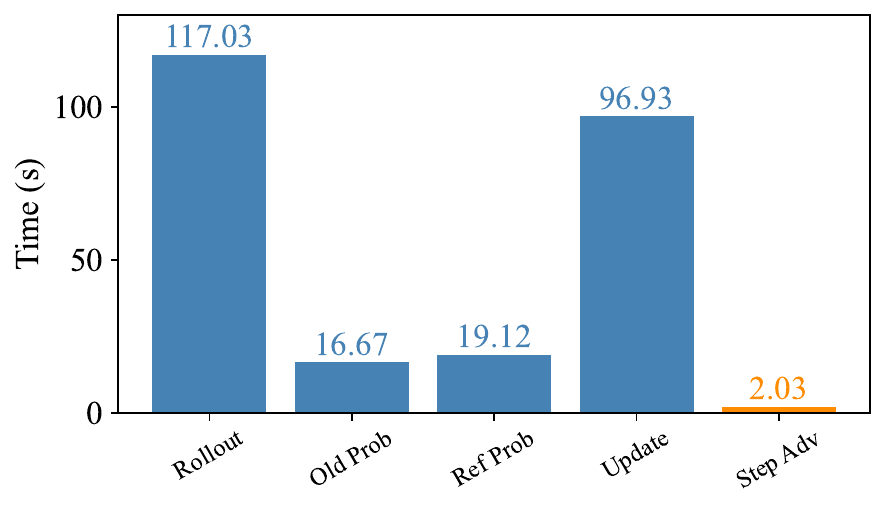}
    \caption{Per-iteration training time breakdown of HPO with Qwen2.5-7B. Blue bars denote components shared with GRPO, while orange bars denote HPO-specific additions. \textbf{The added overhead in HPO is negligible ($<0.8\%$).}}
    \label{fig:efficiency Analysis}
\end{figure}

As shown in Figure~\ref{fig:efficiency Analysis}, the added components introduce negligible overhead. Compared with GRPO, HPO requires only an additional 2.03s to compute the step-level advantage, accounting for approximately 0.8\% of the total time per step, which demonstrates that HPO achieves computational efficiency comparable to that of GRPO.

\section{Related Work}
\paragraph{LLMs as decision-making agents.}
Large language models (LLMs) have been increasingly used as autonomous agents for reasoning, planning, and action execution across diverse domains, including code generation~\cite{zhang2024codeagent}, smart device control~\cite{zhang-zhang-2024-look,gurreal,he2024webvoyager,hong2024cogagent}, and interactive games~\cite{abdulhailmrl,prasad2024adapt}. Early approaches primarily relied on prompting-based frameworks such as ReAct~\cite{yao2022react} and Self-Refine~\cite{madaan2023self}, which depend on strong proprietary models (e.g., OpenAI o3) and do not endow models with intrinsic agentic capabilities. More recent work shifts toward supervised fine-tuning (SFT)~\cite{chen2023fireact} and reinforcement learning (RL)~\cite{zhai2024fine}, allowing agents to acquire behaviors directly through data or environment interaction rather than handcrafted prompts.

\paragraph{Reinforcement learning for LLM agents.}
Reinforcement learning has emerged as a key post-training paradigm for LLMs, supporting both improved reasoning~\cite{guo2025deepseek,jaech2024openai,pennino2025reasoning} and interactive decision-making~\cite{xi2025agentgym,peiyuan2024agile}. Common algorithms include PPO~\cite{schulman2017proximal}, GRPO~\cite{shao2024deepseekmath}, and REINFORCE++~\cite{hu2025reinforce++}. However, many representative works (e.g., DeepSeek-R1) focus on single-turn settings and do not apply to long-horizon interaction. Although recent studies explore multi-step RL for agent training~\cite{cao2025skyrl,jin2025search,qiwebrl,wang2025ragen}, they often face challenges in scalability and optimization stability when applied to complex and diverse environments~\cite{jin2025search,xue2025simpletir,sun2025zerosearch}.

\section{Conclusion}
This work proposes a novel policy gradient estimator that compares the current policy with the hindsight distribution in an intent space, yielding low-variance and smooth learning signals that substantially improve the stability and efficiency of long-horizon training, and offering a new pathway toward stable critic-free reinforcement learning. Future work will explore more fine-grained constructions of the intent space and extend this framework to more complex multimodal and open-ended environments.

\section*{Acknowledgments}
Thanks for the kind suggestions and support from Huawei Large Model Data Technology Lab.

\section*{Impact Statement}
This paper presents work whose goal is to advance the field of Machine
Learning. There are many potential societal consequences of our work, none which we feel must be specifically highlighted here.
\bibliography{example_paper}
\bibliographystyle{icml2026}

\newpage
\appendix
\onecolumn
\section*{Appendix}

\section{Experimental Details}
\label{sec:exp_dt}
\subsection{Task Description}
\paragraph{SearchQA.} The deep search senario features a search engine–based environment equipped with specialized tools and APIs supporting the interaction with search engines. These APIs enable agents to dynamically generate search queries during the reasoning process, retrieve relevant information from external sources, and incorporate the retrieved information into subsequent reasoning steps. This setting allows agents to engage in complex reasoning processes that involve iterative searching and information integration, thereby enhancing their capability to solve intricate problems where external knowledge is essential.

\paragraph{TextCraft.} TextCraft is a text-based game environment mirroring Minecraft. The APIs in TextCraft include crafting, inventory management, and dynamic narrative generation. These APIs allow agents to execute predefined crafting recipes, manipulate inventory contents, navigate virtual spaces.

\subsection{Experimental Settings.}
\paragraph{Settings for SearchQA.} The maximum prompt length is 1024 tokens, and the maximum response length is 512 tokens. The max turn is set to 4. The learning rate is $1 \times 10^{-6}$ for the actor. The reward consists of an outcome reward and a format reward, weighted in a ratio of 8:2. We set the train batch size to 64 and use a group size of 8. Rollout and validation temperatures are set to 1.0 and 0.0, respectively. The mini-batch size is 32, and the KL-divergence loss coefficient is set to 0.001. We use E5 as the retriever. The weighting coefficient $\omega$ is set to $0.5$. For the training data, we use a mixture of NQ and HotpotQA. In addition, for offline HPO, we use 8 offline correct trajectories per query. We use Qwen3-Embedding-0.6B as encoder. During training, we save a model checkpoint every 50 steps; for the main experimental results, we test these checkpoints and report the best testing performance.

\paragraph{Settings for TextCraft.} The maximum prompt length is 1024 tokens, and the maximum response length is 512 tokens. The max turn is set to 30. The learning rate is $1 \times 10^{-6}$ for the actor. We adopt a outcome-based reward, assigning a reward of 1 for success and 0 for failure. We set the train batch size to 64 and use a group size of 8. Rollout and validation temperatures are set to 1.0 and 0.0, respectively. The mini-batch size is 32, and the KL-divergence loss coefficient is set to 0.001. The weighting coefficient $\omega$ is set to $0.5$. For offline HPO, we use 8 offline correct trajectories per query. We use Qwen3-Embedding-0.6B as encoder. During training, we save a model checkpoint every 50 steps; for the main experimental results, we test these checkpoints and report the best testing performance.

\section{General Hindsight Distribution}
\label{app:general_hindsight}

We first revisit the binary-reward definition used in the main text. Let
\(R_t=\sum_{t'=t}^{\infty}r_{t'}\) denote the future return from step \(t\). In the binary setting, the hindsight distribution is defined by conditioning on successful future outcomes:
\begin{equation}
\rho_\pi^h(s,a)
=
\sum_{t=0}^{\infty}
\mathbb{P}_\pi\!\left((s_t,a_t)=(s,a)\mid R_t=1\right).
\end{equation}
By Bayes' rule, this can be rewritten as
\begin{align}
\rho_\pi^h(s,a)
&=
\frac{1}{Z}
\sum_{t=0}^{\infty}
\mathbb{P}_\pi\!\left((s_t,a_t)=(s,a)\right)
\mathbb{P}_\pi\!\left(R_t=1\mid s_t=s,a_t=a\right) \\
&=
\frac{\rho_\pi(s,a)Q_\pi(s,a)}{Z},
\end{align}
where \(Q_\pi(s,a)=\mathbb{P}_\pi(R_t=1\mid s_t=s,a_t=a)\) in the binary-reward case and \(Z\) is the normalizing constant.

This form suggests a direct extension beyond binary rewards: instead of conditioning on the event \(R_t=1\), we construct the hindsight distribution by reweighting probability mass according to realized future returns. Under continuous or dense rewards, the hindsight distribution should therefore take the following form:
\begin{equation}
\rho_\pi^h(s,a)
=
\frac{1}{Z}
\sum_{t=0}^{\infty}
\mathbb{E}_{\pi}
\!\left[
R_t\mathbf{1}\!\left((s_t,a_t)=(s,a)\right)
\right],
\end{equation}
where \(Z\) normalizes the distribution. Equivalently, this can be written as \(\rho_\pi^h(s,a)=\rho_\pi(s,a)Q_\pi(s,a)/Z\), where \(Q_\pi(s,a)=\mathbb{E}_\pi[R_t\mid s_t=s,a_t=a]\). If returns can be negative or have different scales, one can use a nonnegative normalized or shifted return as the weighting function.

In summary, the two cases are unified by the same reweighting view. \textbf{Binary rewards (0/1):} the distribution is obtained via rejection sampling over trajectories, retaining only those with return equal to 1. \textbf{Continuous rewards:} the distribution is constructed by reweighting probability mass according to trajectory returns.

\section{Embedding Model Ablation}
\label{app:embedding_ablation}

HPO constructs intent distributions in a semantic embedding space, so a natural question is whether its step-level advantage estimates are sensitive to the particular embedding model used to represent state--action pairs. To examine this, we conduct an ablation over the choice of encoder. In our main experiments, we use \textbf{Qwen3-Embedding-0.6B} as the default encoder. We compare it with two alternatives: \textbf{Qwen3-Embedding-4B}, which has a larger model scale within the same embedding family, and \textbf{all-MiniLM-L12-v2}, which provides a smaller encoder with a different architecture.

For each encoder, we recompute the HPO step-level advantages on the same set of 256 samples. We then compare the resulting advantage rankings using Kendall's $\tau$, which measures whether different encoders assign consistent relative importance to the same steps. Let m1, m2, and m3 denote Qwen3-Embedding-0.6B, Qwen3-Embedding-4B, and all-MiniLM-L12-v2, respectively. The rank correlations are highly significant across encoder choices: m1--m2 yields $p < 2.2\times 10^{-308}$, and m1--m3 yields $p = 6.8\times 10^{-212}$.

These results show that the advantage estimates produced by HPO are strongly aligned across embedding models, including both a larger encoder from the same family and an encoder with a different architecture. This suggests that HPO mainly relies on stable semantic structure in the embedding space rather than idiosyncrasies of a particular encoder, indicating that the method is not sensitive to the specific embedding model choice.

\section{Proof}
\subsection{Proof of Lemma 3.1}
\begin{lemma}[Variance Decomposition]

Consider the REINFORCE gradient estimator $\hat{G}_b$ with the baseline $b$,
\begin{equation}
\hat{G}_b \;=\; g(a)\,\big(\hat Q(s,a)-b(s)\big),
\end{equation}
where $a \sim \pi(\cdot\mid s)$ and $g(a)=\nabla_\theta \log \pi_\theta(a\mid s)$.

Let $\|\cdot\|$ denotes the Euclidean norm.  Then its conditional variance admits the decomposition
\begin{align}
\mathrm{Var}(\hat{G}_b \mid s)\!=\!\mathbb{E}\!\left[\|g(a)\|^2\mid s\right]
\!\cdot\!
\mathrm{Var}\!\left(\hat Q(s,a)-b(s)\mid s\right)
\end{align}
\end{lemma}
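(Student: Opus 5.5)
The plan is to expand the conditional variance of the vector-valued estimator as a trace of its covariance, i.e.
\begin{equation*}
\mathrm{Var}(\hat G_b\mid s)=\mathbb{E}\!\left[\|\hat G_b\|^2\mid s\right]-\bigl\|\mathbb{E}[\hat G_b\mid s]\bigr\|^2 ,
\end{equation*}
and to evaluate the two terms separately. The main tool is the score-function identity $\mathbb{E}[g(a)\mid s]=\sum_a \nabla_\theta \pi_\theta(a\mid s)=\nabla_\theta 1=0$.

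\textbf{Second-moment term.} Since $b(s)$ is a scalar, $\|\hat G_b\|^2=\|g(a)\|^2\,(\hat Q(s,a)-b(s))^2$. I would then use the same working assumption that the paper adopts after Theorem~\ref{thm:optimal_baseline}: the gradient norm $\|g(a)\|^2$ and the return signal $\hat Q(s,a)$ are loosely correlated given $s$. I would make this precise as uncorrelatedness of $\|g(a)\|^2$ and $(\hat Q-b)^2$ conditional on $s$. The motivation is that $\hat Q$ is a Monte Carlo return, driven by the future rollout randomness after $a$. Under this assumption the expectation factorizes:
\begin{equation*}
\mathbb{E}\!\left[\|\hat G_b\|^2\mid s\right]=\mathbb{E}\!\left[\|g(a)\|^2\mid s\right]\mathbb{E}\!\left[(\hat Q-b)^2\mid s\right].
\end{equation*}

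\textbf{Mean term.} By the same decoupling, $\mathbb{E}[\hat G_b\mid s]=\mathbb{E}[g\mid s]\,\mathbb{E}[\hat Q-b\mid s]$, and this vanishes by the score identity. The variance therefore reduces to the product of $\mathbb{E}[\|g\|^2\mid s]$ and the dispersion of the baseline-adjusted signal $\hat Q-b$ about $0$. Zero is the natural centering here: it is where the baseline-adjusted signal is centered when the baseline is exact, so that $\mathbb{E}[\hat Q-b\mid s]=0$. That dispersion is the quantity written $\mathrm{Var}(\hat Q(s,a)-b(s)\mid s)$ in Lemma~\ref{thm:var_decomp_baseline}.

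\textbf{Expected obstacle.} The delicate step is this last identification. In the proof I would state explicitly that the ``variance of the baseline-adjusted signal'' in \eqref{var_decomp} is measured about zero, i.e.\ it is the mean-square $\mathbb{E}[(\hat Q-b)^2\mid s]$. This reading is the one that makes Theorem~\ref{thm:optimal_baseline} consistent. Minimizing $\mathbb{E}[\|g\|^2(\hat Q-b)^2\mid s]$ over $b$ gives exactly $b^*(s)=\mathbb{E}[\|g\|^2\hat Q\mid s]/\mathbb{E}[\|g\|^2\mid s]$. Expanding the square around $b^*$ gives the excess $\mathbb{E}[\|g\|^2\mid s](\tilde b-b^*)^2$. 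Both facts fall out of the decomposition once it is read this way.

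I would also remark on the case where the decoupling assumption fails. Then the identity holds with $\mathbb{E}[\|g\|^2(\hat Q-b)^2\mid s]$ on the right-hand side, minus $\|\nabla_\theta V\|^2$, where $\nabla_\theta V=\mathbb{E}[g\hat Q\mid s]$ by the score identity. The clean product form in \eqref{var_decomp} is the loosely-correlated approximation of this exact expression.
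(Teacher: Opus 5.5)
Your argument matches the paper's proof up to the factorization. Both use the expansion $\mathrm{Var}(\hat G_b\mid s)=\mathbb{E}[\|g\|^2(\hat Q-b)^2\mid s]-\|\mathbb{E}[g(\hat Q-b)\mid s]\|^2$, the same decoupling of $g(a)$ from $\hat Q$, and the score identity $\mathbb{E}[g\mid s]=0$.

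The two proofs part ways only at the last step. The paper keeps the usual meaning of $\mathrm{Var}$ and adds the hypothesis $b(s)=\mathbb{E}[\hat Q(s,a)\mid s]$, so that $\mathbb{E}[X^2\mid s]=\mathrm{Var}(X\mid s)$ holds literally. You instead read the right-hand side as the uncentered mean square $\mathbb{E}[(\hat Q-b)^2\mid s]$.

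Some repair of this kind is unavoidable. With the usual meaning, $\mathrm{Var}(\hat Q-b\mid s)=\mathrm{Var}(\hat Q\mid s)$ does not depend on $b$. Under the decoupling, however, the left-hand side equals $\mathbb{E}[\|g\|^2\mid s]\bigl(\mathrm{Var}(\hat Q\mid s)+(\mathbb{E}[\hat Q\mid s]-b)^2\bigr)$. So the literal statement fails for every $b\neq\mathbb{E}[\hat Q\mid s]$.

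The two repairs trade off differently. The paper's keeps standard notation but covers only one baseline. Yours covers every $b$ and is compatible with Theorem~\ref{thm:optimal_baseline}. You should, however, state it as an explicit convention rather than as what the symbol already means.

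Two smaller points.
\begin{enumerate}
\item Your precisely stated assumption, uncorrelatedness of $\|g\|^2$ and $(\hat Q-b)^2$, does not give the mean-term factorization. That step needs each component of $g$ to be uncorrelated with $\hat Q$. The paper covers both steps by invoking conditional independence.
\item That second condition already forces the per-state gradient $\mathbb{E}[g\hat Q\mid s]$ to vanish, in both proofs. So your closing exact expression $\mathbb{E}[\|g\|^2(\hat Q-b)^2\mid s]-\|\mathbb{E}[g\hat Q\mid s]\|^2$ is the honest general statement. It is also the identity, not the product form, that yields $b^*$. The product form is minimized at $\mathbb{E}[\hat Q\mid s]$, which equals $b^*$ only under the decoupling. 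Your claim that both facts of Theorem~\ref{thm:optimal_baseline} ``fall out of the decomposition'' should be attributed to the exact expression.
\end{enumerate}
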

\begin{proof}
Condition on the state $s$ and write
\[
\hat G_b = g(a)\,X,\qquad X := \hat Q(s,a)-b(s),\qquad a\sim \pi(\cdot\mid s).
\]
Under the standard simplifying assumption that $g(a)$ and $\hat Q(s,a)$ are (conditionally) uncorrelated given $s$, we have
\[
\text{Var}(\hat G_b\mid s)=\mathbb{E}\!\left[\|g(a)\|^2 X^2 \mid s\right]-\left\|\mathbb{E}[g(a)X\mid s]\right\|^2.
\]
Using conditional independence,
\[
\mathbb{E}\!\left[\|g(a)\|^2 X^2 \mid s\right]=\mathbb{E}\!\left[\|g(a)\|^2\mid s\right]\mathbb{E}\!\left[X^2\mid s\right],
\qquad
\mathbb{E}[g(a)X\mid s]=\mathbb{E}[g(a)\mid s]\mathbb{E}[X\mid s].
\]
Moreover, for score functions one typically has $\mathbb{E}[g(a)\mid s]=\mathbb{E}[\nabla_\theta \log\pi_\theta(a\mid s)\mid s]=0$.
Therefore,
\[
\text{Var}(\hat G_b\mid s)=\mathbb{E}\!\left[\|g(a)\|^2\mid s\right]\mathbb{E}\!\left[X^2\mid s\right].
\]
Finally, if the baseline is chosen as the conditional mean $b(s)=\mathbb{E}[\hat Q(s,a)\mid s]$, then $\mathbb{E}[X\mid s]=0$ and thus $\mathbb{E}[X^2\mid s]=\text{Var}(X\mid s)=\text{Var}(\hat Q(s,a)-b(s)\mid s)$, yielding
\[
\text{Var}(\hat G_b\mid s)=\mathbb{E}\!\left[\|g(a)\|^2\mid s\right]\text{Var}\!\left(\hat Q(s,a)-b(s)\mid s\right),
\]
\end{proof}

\subsection{Proof of Theorem 3.3}
\begin{theorem}[Optimal Baseline and Excess Variance]

Among all scalar baselines $b(s)$ independent of the sampled action $a$, the conditional variance
$\mathrm{Var}(\hat G_b \mid s)$ is minimized by
\begin{equation}
b^*(s) =
\frac{\mathbb{E}\!\left[\|g(a)\|^2 \hat Q(s,a)\mid s\right]}
{\mathbb{E}\!\left[\|g(a)\|^2\mid s\right]}.
\end{equation}
Moreover, for any alternative baseline $\tilde b(s)$, the increase in variance admits the exact expression
\begin{align}
    \mathrm{Var}(\hat{G}_{\tilde b} \mid s)& \,-\,  \mathrm{Var}(\hat{G}_{b^*} \mid s), \,=\,   \nonumber\\
&\mathbb{E}\!\left[\,\|g(a)\|^2\mid s\right](\tilde b(s)-b^*(s))^2.
\end{align}
\end{theorem}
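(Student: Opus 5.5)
The plan is to work directly with the second-moment form of the conditional variance, rather than with the product formula of Lemma~\ref{thm:var_decomp_baseline}. That lemma assumed a mean-centred baseline, and that assumption is exactly what is being optimized away here. Fix $s$ and write $\hat G_b = g(a)(\hat Q(s,a)-b)$, where $b=b(s)$ is a scalar constant given $s$. The first step is to observe that the conditional mean of $\hat G_b$ does not depend on $b$. Indeed, $\mathbb{E}[g(a)\,b\mid s] = b\,\mathbb{E}[\nabla_\theta\log\pi_\theta(a\mid s)\mid s]=b\,\nabla_\theta\sum_a\pi_\theta(a\mid s)=0$, so $\mathbb{E}[\hat G_b\mid s]=\mathbb{E}[g(a)\hat Q(s,a)\mid s]$ for every $b$. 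Here we interpret the vector variance as $\mathrm{Var}(\hat G_b\mid s)=\mathbb{E}[\|\hat G_b\|^2\mid s]-\|\mathbb{E}[\hat G_b\mid s]\|^2$, which is the trace of the covariance. Hence only the second moment depends on $b$.

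The second step is to expand that second moment as a quadratic in $b$:
\begin{equation*}
\mathbb{E}\!\left[\|\hat G_b\|^2\mid s\right] = \mathbb{E}\!\left[\|g\|^2\hat Q^2\mid s\right] - 2b\,\mathbb{E}\!\left[\|g\|^2\hat Q\mid s\right] + b^2\,\mathbb{E}\!\left[\|g\|^2\mid s\right].
\end{equation*}
This uses only linearity and the fact that $b$ is $a$-independent; no independence assumption between $g$ and $\hat Q$ is needed. Assuming $\mathbb{E}[\|g\|^2\mid s]>0$ (otherwise the variance is trivially constant in $b$), the function is a strictly convex parabola in $b$. Setting its derivative to zero gives the unique minimizer $b^*(s)=\mathbb{E}[\|g\|^2\hat Q\mid s]/\mathbb{E}[\|g\|^2\mid s]$, which is the claimed formula.

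For the excess-variance identity, I would complete the square. Write $c=\mathbb{E}[\|g\|^2\mid s]$, so that $\mathbb{E}[\|g\|^2\hat Q\mid s]=c\,b^*$ and the $b$-dependent part becomes $c\,b^2-2c\,b\,b^*=c(b-b^*)^2-c(b^*)^2$. Everything else in $\mathrm{Var}(\hat G_b\mid s)$ is independent of $b$. So $\mathrm{Var}(\hat G_b\mid s)=C(s)+c\,(b-b^*)^2$ for a $b$-free term $C(s)$, and taking the difference at $b=\tilde b$ and $b=b^*$ yields exactly $\mathbb{E}[\|g(a)\|^2\mid s](\tilde b(s)-b^*(s))^2$.

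The only delicate points are technical: exchanging gradient and sum (or integral) to get the zero-mean score identity, and finiteness of the moments $\mathbb{E}[\|g\|^2\hat Q^2\mid s]$ and $\mathbb{E}[\|g\|^2\mid s]$. I would state these as standing regularity assumptions rather than treat them as real obstacles. Everything else is an elementary quadratic minimization.
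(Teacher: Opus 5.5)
Your proposal is correct and follows essentially the same route as the paper's proof. Both use $\mathbb{E}[g(a)\mid s]=0$ to write $\mathrm{Var}(\hat G_b\mid s)$ as $\mathbb{E}[\|g(a)\|^2(\hat Q-b)^2\mid s]$ minus a $b$-independent term, and minimize this weighted quadratic to obtain $b^*$. The paper gets the excess-variance identity by expanding around $b^*$ and noting that the cross term vanishes, which is the same computation as your completing the square.
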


\begin{proof}
Fix $s$ and write $X=\hat Q(s,a)$.
For any scalar baseline $b=b(s)$ independent of $a$,
\[
\text{Var}(\hat G_b\mid s)
=\mathbb{E}\!\left[\|g(a)\|^2(X-b)^2\mid s\right]
-\left\|\mathbb{E}[g(a)X\mid s]\right\|^2,
\]
where we use $\mathbb{E}[g(a)\mid s]=0$, so subtracting an action-independent baseline does not change the conditional mean. Since the second term does not depend on $b$, the variance is minimized by
\[
b^*(s)=\arg\min_b \mathbb{E}\!\left[\|g(a)\|^2(X-b)^2\mid s\right]
=\frac{\mathbb{E}\!\left[\|g(a)\|^2X\mid s\right]}{\mathbb{E}\!\left[\|g(a)\|^2\mid s\right]}.
\]
For any alternative $\tilde b(s)$,
\begin{align*}
&\mathbb{E}\!\left[\|g(a)\|^2(X-\tilde b)^2\mid s\right] \\
&= \mathbb{E}\!\left[\|g(a)\|^2(X-b^*+b^*-\tilde b)^2\mid s\right] \\
&= \mathbb{E}\!\left[\|g(a)\|^2(X-b^*)^2\mid s\right]
-2(\tilde b-b^*)\,\mathbb{E}\!\left[\|g(a)\|^2(X-b^*)\mid s\right]+\mathbb{E}\!\left[\|g(a)\|^2\mid s\right](\tilde b-b^*)^2.
\end{align*}
Since $b^*=\mathbb{E}[\|g(a)\|^2X\mid s]/\mathbb{E}[\|g(a)\|^2\mid s]$, the cross term vanishes,
$\mathbb{E}[\|g(a)\|^2(X-b^*)\mid s]=0$, and therefore
\[
\mathbb{E}\!\left[\|g(a)\|^2(X-\tilde b)^2\mid s\right]
-\mathbb{E}\!\left[\|g(a)\|^2(X-b^*)^2\mid s\right]
=\mathbb{E}\!\left[\|g(a)\|^2\mid s\right](\tilde b-b^*)^2.
\]

which immediately gives
\[
\text{Var}(\hat G_{\tilde b}\mid s)-\text{Var}(\hat G_{b^*}\mid s)
=\mathbb{E}[\|g(a)\|^2\mid s]\,
(\tilde b(s)-b^*(s))^2.
\]
\end{proof}

\subsection{Proof of Lemma 4.2}
\label{app:proof_lem42}
\begin{lemma}
Consider the policy objective $\mathcal{J}(\theta)=\mathbb{E}_{\tau \sim \pi}[\sum_{t\ge 0}r_t]$. Its policy gradient admits the following two equivalent forms:
\begin{align}
    \nabla_\theta \mathcal{J}(\theta)& = \mathbb{E}_{(s,a)\sim\rho_{\pi}}[Q_\pi(s,a) \nabla_\theta\log \pi_\theta (a\mid s) ] \\
    & =\mathbb{E}_{(s,a)\sim\rho_\pi}[-\frac{\delta KL(\rho^h_\pi || \rho_\pi)}{\delta \rho_\pi(s,a)}\nabla_\theta \log \pi_\theta (a\mid s)]
\end{align}
where $-\frac{\delta KL(\rho^h_\pi || \rho_\pi)}{\delta \rho_\pi(s,a)}=\frac{\rho^h_\pi(s,a)}{\rho_\pi(s,a)}$ denotes the pointwise variational derivative of $KL(\rho^h_\pi || \rho_\pi)$ with respect to $\rho_\pi(s,a)$.

\end{lemma}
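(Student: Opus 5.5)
The first equality is the standard policy gradient theorem written in occupancy-measure form. I would prove it directly by the likelihood-ratio trick. Write $\mathcal{J}(\theta)=\mathbb{E}_{\tau\sim\pi_\theta}[R(\tau)]$ with trajectory density $p_\theta(\tau)=\mu(s_0)\prod_t \pi_\theta(a_t\mid s_t)\mathcal{T}(s_{t+1}\mid s_t,a_t)$. Differentiating gives $\nabla_\theta\mathcal{J}=\mathbb{E}_\tau[R(\tau)\sum_t\nabla_\theta\log\pi_\theta(a_t\mid s_t)]$, since $\mu$ and $\mathcal{T}$ do not depend on $\theta$.

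Next I apply causality. For each $t$, rewards $r_{t'}$ with $t'<t$ are determined before $a_t$ is drawn, and $\mathbb{E}[\nabla_\theta\log\pi_\theta(a_t\mid s_t)\mid s_t]=0$. Hence only the reward-to-go $R_t=\sum_{t'\ge t}r_{t'}$ survives. Conditioning on $(s_t,a_t)$ then replaces $R_t$ by $Q_\pi(s_t,a_t)$. Summing over $t$ and regrouping by the value of $(s_t,a_t)$ turns $\sum_t\mathbb{E}[\cdot]$ into an integral against $\sum_t\mathbb{P}(s_t=s,a_t=a)$. This is $\rho_\pi$ in the undiscounted convention the paper adopts, matching Definition~\ref{def:hindsight_policy}, where the $(1-\gamma)$ factor is dropped. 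That gives the first line.

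For the second equality I would use the Bayes identity derived in Appendix~\ref{app:general_hindsight}. In the binary-reward case, $\mathbb{P}_\pi((s_t,a_t)=(s,a)\mid R_t=1)=\mathbb{P}_\pi((s_t,a_t)=(s,a))\,Q_\pi(s,a)/\mathbb{P}_\pi(R_t=1)$. Summing over $t$ yields $\rho^h_\pi(s,a)=\rho_\pi(s,a)Q_\pi(s,a)/Z$, so $Q_\pi(s,a)=Z\,\rho^h_\pi(s,a)/\rho_\pi(s,a)$.

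This step is the main obstacle. The normalizer $\mathbb{P}_\pi(R_t=1)$ depends on $t$, so the identity is only exact up to a positive scalar. I would handle this in one of two ways. One option is to note that in the sparse terminal-reward setting with $\gamma=1$, $\mathbb{P}(R_t=1)$ equals the success probability $\mathcal{J}$ for every $t$ reached. The other is to absorb $Z$ into the step size. In either case $\nabla_\theta\mathcal{J}$ is proportional to, and up to $Z$ equal to, $\mathbb{E}_{\rho_\pi}[(\rho^h_\pi/\rho_\pi)\nabla_\theta\log\pi_\theta]$. I would state this constant explicitly rather than hide it.

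Finally, I would verify the variational derivative. Write $KL(\rho^h_\pi\|\rho_\pi)=\int\rho^h_\pi\log\rho^h_\pi-\int\rho^h_\pi\log\rho_\pi$, treating $\rho^h_\pi$ as fixed. Perturbing $\rho_\pi\mapsto\rho_\pi+\epsilon\eta$ gives a first variation of $-\int\rho^h_\pi\eta/\rho_\pi$. So $-\delta KL/\delta\rho_\pi(s,a)=\rho^h_\pi(s,a)/\rho_\pi(s,a)$, and substituting this completes the identity.
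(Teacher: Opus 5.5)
Your proposal follows the paper's route---policy gradient theorem, then $Q_\pi = Z\,\rho^h_\pi/\rho_\pi$, then discard the positive constant $Z$. It is more careful than the paper, which simply cites the policy gradient theorem, asserts the variational derivative without computing it, and avoids your time-dependent normalizer by \emph{defining} $\rho^h_\pi := \rho_\pi Q_\pi/Z$ with the single global constant $Z=\sum_{s,a}\rho_\pi(s,a)Q_\pi(s,a)$.

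That global definition is what makes your ``absorb $Z$ into the step size'' option legitimate. Your first option additionally needs a fixed horizon. If the terminal reward arrives at a random step $T$, then $\mathbb{P}_\pi(R_t=1)=\mathbb{P}_\pi(T\ge t,\ R(\tau)=1)$ decreases in $t$. A $t$-dependent normalizer then reweights the per-step terms $\mathbb{E}[Q_\pi(s_t,a_t)\nabla_\theta\log\pi_\theta(a_t\mid s_t)]$ instead of rescaling the whole gradient.
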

\begin{proof}
For $\mathcal{J}(\theta)=\mathbb{E}_{\tau \sim \pi}[\sum_{t\ge 0}r_t]$, by the policy gradient theorem, its gradient admits the following equivalent form
\begin{equation}
    \nabla_\theta \mathcal{J}(\theta)=\mathbb{E}_{(s,a)\sim\rho_\pi}[Q_\pi(s,a)\nabla_\theta \log \pi_\theta (a\mid s) ]
\end{equation}
Writing the expectation explicitly, we obtain
\begin{equation}
\nabla_\theta \mathcal{J}(\theta)=\sum_{s,a} \rho_\pi(s,a)Q_\pi(s,a)\nabla_\theta \log \pi_\theta (a\mid s)
\end{equation}
Let $Z=\sum_{s,a}\rho_\pi(s,a)Q_\pi(s,a)$. Define $\rho_\pi^h(s,a)=\rho_\pi(s,a)Q_\pi(s,a)/Z$. Then the above becomes
\begin{align}
    \nabla_\theta \mathcal{J}(\theta)&=\sum_{s,a} \rho_\pi(s,a)Q_\pi(s,a)\nabla_\theta \log \pi_\theta (a\mid s) \\
    & =Z\sum_{s,a}\rho_\pi(s,a) \frac{\rho^h_\pi(s,a)}{\rho_\pi(s,a)}\nabla_\theta \log \pi_\theta (a\mid s)  \\
    &=Z\,\mathbb{E}_{(s,a)\sim\rho_\pi}[\frac{\rho^h_\pi(s,a)}{\rho_\pi(s,a)}\nabla_\theta \log \pi_\theta (a\mid s) ]
\end{align}
where the ratio \(\frac{\rho^h_\pi(s,a)}{\rho_\pi(s,a)}\) can be interpreted as the pointwise variational derivative of \(KL(\rho^h_\pi || \rho_\pi)\) with respect to \(\rho_\pi\), i.e.,\(
-\frac{\delta KL(\rho^h_\pi || \rho_\pi)}{\delta \rho_\pi(s,a)}
\).

Since \(Z\) is a positive constant, it does not affect the direction of the gradient. Therefore, the policy gradient admits the following two equivalent forms:
\begin{align}
    \nabla_\theta \mathcal{J}(\theta)& = \mathbb{E}_{(s,a)\sim\rho_{\pi}}[Q_\pi(s,a) \nabla_\theta\log \pi_\theta (a\mid s) ] \\
    & =\mathbb{E}_{(s,a)\sim\rho_\pi}[-\frac{\delta KL(\rho^h_\pi || \rho_\pi)}{\delta \rho_\pi(s,a)}\nabla_\theta \log \pi_\theta (a\mid s)]
\end{align}
\end{proof}

\subsection{Proof of Lemma 4.3}
\begin{lemma}
Define $D=\sup_{(s,a),(s',a')\in \rho_\pi} d\big((s,a),(s',a')\big)$. Let $f^*(s,a)$ denote the $-\frac{\delta W_1(\rho_\pi || \rho^h_\pi)}{\delta \rho_\pi(s,a)}$, and $g(s,a)$ denote the $-\frac{\delta KL(\rho^h_\pi || \rho_\pi)}{\delta \rho_\pi(s,a)}$, then
\begin{equation}
    \mathrm{Var}_{\rho_\pi}(f^*(s,a)) \le D^2/4.
\end{equation}
and
\begin{equation}
    \mathrm{Var}_{\rho_\pi}(g(s,a))
=\int \frac{\rho_\pi^h(s,a)^2}{\rho_\pi(s,a)}\,d(s,a) - 1
= \chi^2(\rho_\pi^h\|\rho_\pi).
\end{equation}
\end{lemma}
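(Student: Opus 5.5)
The two claims are independent, so I would prove them separately, both as direct computations under $\rho_\pi$ (treated as a probability measure, normalizing if needed).

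\textbf{Wasserstein bound.} By Proposition~\ref{prop:wasserstein_dual_value} and the discussion in Section~\ref{k_potential}, $f^*$ is a Kantorovich potential, so it is $1$-Lipschitz on the intent space. On the support of $\rho_\pi$, any two points satisfy $|f^*(z)-f^*(z')|\le d(z,z')\le D$. Hence $f^*$ takes values in an interval $[m,m+D]$ $\rho_\pi$-almost surely, with $m=\inf_{\mathrm{supp}\,\rho_\pi} f^*$.

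I would then invoke Popoviciu's inequality: a random variable supported in an interval of length $D$ has variance at most $D^2/4$. For completeness I would include its one-line proof. With $c=m+D/2$,
\[
\mathrm{Var}(f^*)\le \mathbb{E}[(f^*-c)^2]\le (D/2)^2 .
\]
The first inequality holds because the mean minimizes mean squared deviation. Two further remarks: the sign convention ($f^*$ versus $-f^*$) is irrelevant for the variance, and the additive constant in the potential is irrelevant as well.

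\textbf{KL signal.} By Lemma~\ref{lem:grad}, $g=\rho_\pi^h/\rho_\pi$, and both are probability measures, with $\rho_\pi^h$ normalized as in the proof of Lemma~\ref{lem:grad}. I would compute the two moments directly. The first moment is
\[
\mathbb{E}_{\rho_\pi}[g]=\int \rho_\pi^h=1 .
\]
The second moment is
\[
\mathbb{E}_{\rho_\pi}[g^2]=\int (\rho_\pi^h)^2/\rho_\pi .
\]
Subtracting $1^2$ gives the first equality. The second equality is the definition of the chi-squared divergence, since
\[
\chi^2(P\|Q)=\int (p/q-1)^2 q=\int p^2/q-1 .
\]

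\textbf{Main obstacle.} The only delicate point is the normalization and absolute-continuity bookkeeping in the KL part. The occupancy measure in Definition~\ref{def:hindsight_policy} is written as an unnormalized sum over $t$, so I would normalize both $\rho_\pi$ and $\rho_\pi^h$ to probability measures so that $\mathbb{E}[g]=1$. I would also assume $\rho_\pi^h\ll\rho_\pi$, which holds automatically because $\rho^h_\pi=\rho_\pi Q_\pi/Z$. Without this normalization, the identity holds only up to the factor $Z$.

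Finally, I would note that $\chi^2$ is unbounded: taking $\rho_\pi(s,a)\to0$ with $\rho^h_\pi(s,a)$ fixed makes it blow up. This gives the contrast stated after the lemma.
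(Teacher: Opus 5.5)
Your proof is correct and follows the paper's argument. The first bound uses the $1$-Lipschitz property of the Kantorovich potential on the support of $\rho_\pi$ together with Popoviciu's inequality, and the second is a direct first- and second-moment computation of $g=\rho^h_\pi/\rho_\pi$ followed by expanding $\chi^2$. Your added bookkeeping (normalization, and absolute continuity via $\rho^h_\pi=\rho_\pi Q_\pi/Z$) makes explicit what the paper assumes; one small correction is that rescaling $g$ by $Z$ changes its variance by a factor of $Z^2$, not $Z$.
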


\begin{proof}
For the $W_1$ term, recall the Kantorovich--Rubinstein dual:
\begin{equation}
W_1(\rho_\pi^h,\rho_\pi)=\sup_{\|f\|_{\mathrm{Lip}}\le 1}\ \mathbb{E}_{x\sim \rho_\pi^h}[f(x)]-\mathbb{E}_{x\sim \rho_\pi}[f(x)],
\quad x=(s,a).
\end{equation}

Let $f^*$ be the Kantorovich potential, which is $1$-Lipschitz on the support of $\rho_\pi$ (up to an additive constant). By definition of the diameter $D=\sup_{x,x'\in \mathrm{supp}(\rho_\pi)} d(x,x')$, any $1$-Lipschitz function satisfies, for all $x,x'\in \mathrm{supp}(\rho_\pi)$, $|f^*(x)-f^*(x')|\le d(x,x')\le D$,

Therefore, by Popoviciu's inequality on variances,
\begin{equation}
    \mathrm{Var}_{\rho_\pi}(f^*(x))\le \frac{(\sup f^*-\inf f^*)^2}{4}\le \frac{D^2}{4}.
\end{equation}

For the $KL$ term, by the stated definition,
\begin{equation}
    g(x) \;=\; -\frac{\delta KL(\rho_\pi^h\|\rho_\pi)}{\delta \rho_\pi(x)} \;=\; \frac{\rho_\pi^h(x)}{\rho_\pi(x)},
\end{equation}

assuming $\rho_\pi(x)>0$ whenever $\rho_\pi^h(x)>0$. Then
\begin{equation}
    \mathbb{E}_{x\sim \rho_\pi}[g(x)] = \int \rho_\pi(x)\frac{\rho_\pi^h(x)}{\rho_\pi(x)}\,dx = \int \rho_\pi^h(x)\,dx = 1,
\end{equation}
and
\begin{equation}
\mathrm{Var}_{\rho_\pi}(g(x))
= \mathbb{E}_{\rho_\pi}[g(x)^2]-\Big(\mathbb{E}_{\rho_\pi}[g(x)]\Big)^2
= \int \rho_\pi(x)\Big(\frac{\rho_\pi^h(x)}{\rho_\pi(x)}\Big)^2\,dx - 1
= \int \frac{\rho_\pi^h(x)^2}{\rho_\pi(x)}\,dx - 1.
\end{equation}
Finally, noting that
\begin{equation}
\chi^2(\rho_\pi^h\|\rho_\pi)=\int \frac{(\rho_\pi^h(x)-\rho_\pi(x))^2}{\rho_\pi(x)}\,dx
=\int \frac{\rho_\pi^h(x)^2}{\rho_\pi(x)}\,dx -2\int \rho_\pi^h(x)\,dx + \int \rho_\pi(x)\,dx
=\int \frac{\rho_\pi^h(x)^2}{\rho_\pi(x)}\,dx -1,
\end{equation}
\end{proof}

\section{Additional Analysis}
\subsection{Analysis of the weight $\omega$}
To further investigate the effect of the step-level advantage weight $\omega$, multiple repeated HPO-off experiments were conducted with different $\omega$ settings. Figure~\ref{fig:omega} illustrates the training reward trajectories for varying values of \(\omega\). The results show that introducing a larger \(\omega\) can lead to more stable optimization, while this effect diminishes as \(\omega\) decreases. In addition, larger values of \(\omega\) yield faster reward improvement in the early training stages but converge to slightly lower final performance. A plausible explanation is that a larger \(\omega\) enables more effective learning from sparse rewards during early training, whereas the increased optimization bias introduced by an overly large \(\omega\) may mildly degrade asymptotic performance. Overall, these results indicate that adjusting \(\omega\) provides a practical mechanism for balancing bias and variance during training.
\begin{figure}[h]
    \centering

    \includegraphics[width=0.6\linewidth]{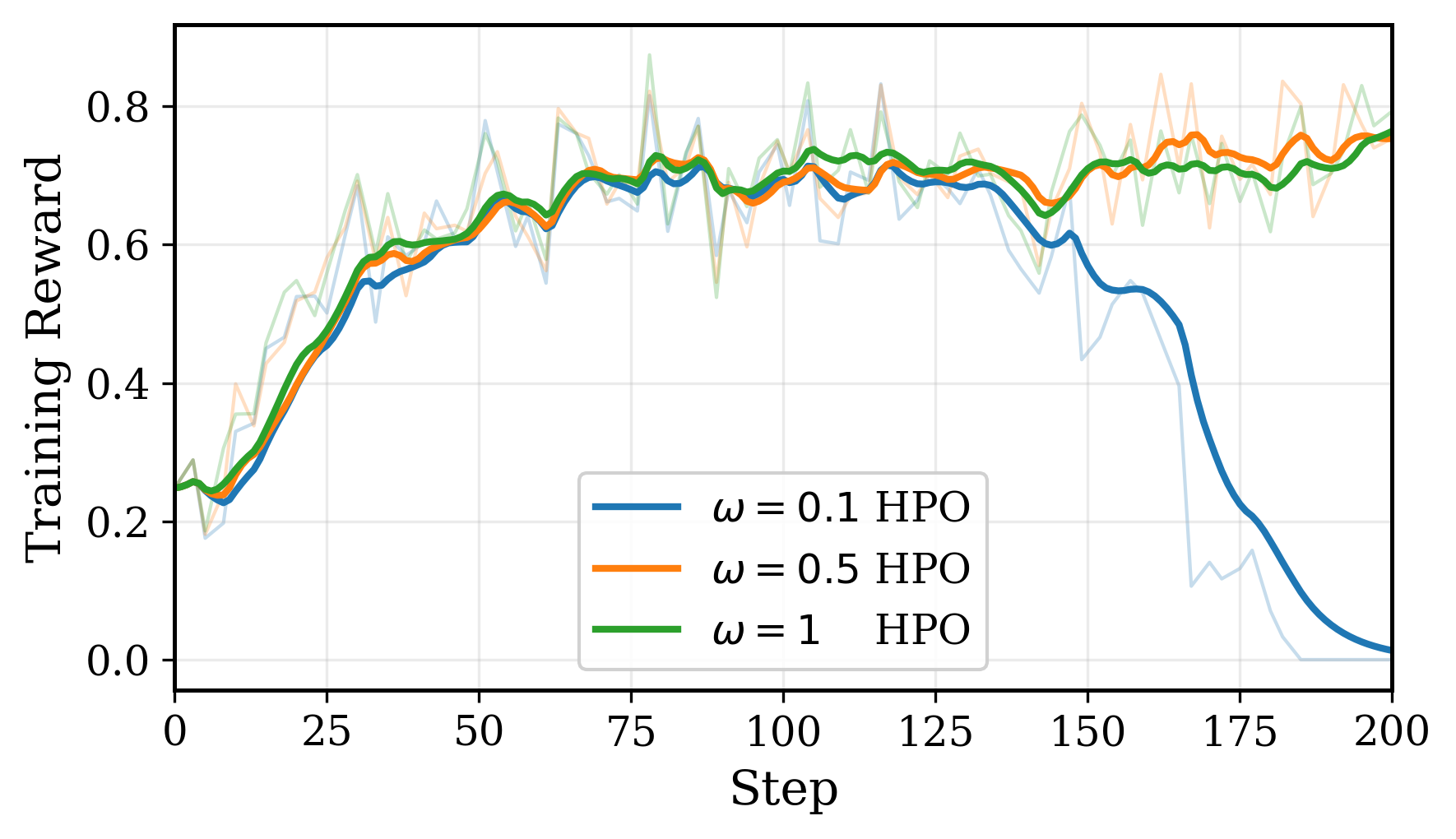}
    \caption{Training reward dynamics of Qwen2.5-3B-Instruct with different $\omega$ settings. }
    \label{fig:omega}
\end{figure}

\subsection{Case Study}
\label{app:case}
We further conduct a qualitative analysis of $A_S$ across representative samples. We find that $A_S$ exhibits strong step-level discrimination: it consistently assigns higher scores to informative and goal-progressing actions, while down-weighting failed, redundant, or uninformative steps. This behavior suggests that $A_S$ captures meaningful local learning signals and also explains why optimizing with $A_S$ alone can still drive effective policy updates.
\begin{tcolorbox}[title=Example 1,
colback=gray!5!white, colframe=gray!75!black, breakable]
\textbf{User:} In what year was the earlier of these two documentaries produced: The Life and Times of Hank Greenberg or Burden of Dreams?

\paragraph{Assistant:} $<\text{think}>$ The question is asking about the production year of two documentaries, and we need to find out which one is earlier and then provide the year of that documentary.$</\text{think}>$ $<\text{search}>$ which documentary was produced earlier, The Life and Times of Hank Greenberg or Burden of Dreams $</\text{search}>$ \\
($\mathbf{A_s}=\textcolor[rgb]{0,0.5,0}{\mathbf{+0.424}}$)

\paragraph{User:} $<\text{information}>$ \\
\textsc{Doc 1} ... The Life and Times of Hank Greenberg is a 1998 documentary film written, directed, and produced by Aviva Kempner about Hank Greenberg, first baseman of the Detroit Tigers, who was inducted into the Baseball Hall of Fame. ... \\
\textsc{Doc 2} ... \\
\textsc{Doc 3} ... \\
$</\text{information}>$

\paragraph{Assistant:} $<\text{think}>$ From the information provided, it is clear that The Life and Times of Hank Greenberg is a 1998 documentary. Now let's find out about the production year of Burden of Dreams.$</\text{think}>$ $<\text{search}>$ when was Burden of Dreams produced $</\text{search}>$ \\
($\mathbf{A_s}=\textcolor[rgb]{0,0.5,0}{\mathbf{+0.218}}$)

\paragraph{User:} $<\text{information}>$ \\
\textsc{Doc 1} ... Burden of Dreams is a 1982 “making-of” documentary film directed by Les Blank, shot during and about the chaotic production of Werner Herzog’s film Fitzcarraldo. ...\\
\textsc{Doc 2} ... \\
\textsc{Doc 3} ... \\
$</\text{information}>$

\paragraph{Assistant:}  From the information provided, it is clear that Burden of Dreams was produced in 1982. Comparing the production years of the two documentaries, The Life and Times of Hank Greenberg (1998) and Burden of Dreams (1982), it is clear that Burden of Dreams was produced earlier. Therefore, the answer is 1982. \\
($\mathbf{A_s}=\textcolor{red}{\mathbf{-0.472}}$)

\paragraph{User:} Your previous action is invalid. If you want to search, you should put the query between $<\text{search}>$ and $</\text{search}>$. If you want to give the final answer, you should put the answer between $<\text{answer}>$ and $</\text{answer}>$. Please try again.

\paragraph{Assistant:} $<\text{think}>$ From the information provided, it is clear that Burden of Dreams was produced in 1982. Comparing the production years of the two documentaries, The Life and Times of Hank Greenberg (1998) and Burden of Dreams (1982), it is clear that Burden of Dreams was produced earlier. Therefore, the answer is 1982.$</\text{think}>$ $<\text{answer}>$ 1982 $</\text{answer}>$ \\
($\mathbf{A_s}=\textcolor[rgb]{0,0.5,0}{\mathbf{+0.377}}$)

\paragraph{User:} Congratulations! You have answered the question correctly!!!

\end{tcolorbox}

\begin{tcolorbox}[title=Example 2,
colback=gray!5!white, colframe=gray!75!black, breakable]
\textbf{User:} Bone carving was used in the casket now on display in what institution?

\paragraph{Assistant:} $<\text{think}>$ I need to find out which institution has the casket with bone carving that is now on display.$</\text{think}>$ $<\text{search}>$ Bone carving was used in the casket now on display in what institution? $</\text{search}>$ \\
($\mathbf{A_s}=\textcolor[rgb]{0,0.5,0}{\mathbf{+1.20}}$)

\paragraph{User:} $<\text{information}>$ \\
\textsc{Doc 1} ... ivory casket … now in the Walters Art Museum …  \\
\textsc{Doc 2} ... a small Anglo-Saxon whale’s bone … chest … densely decorated with knife-cut … scenes … now in the \textbf{British Museum}. … \\
\textsc{Doc 3} ... \\
$</\text{information}>$

\paragraph{Assistant:} $<\text{think}>$ From the search results, it is clear that the casket with bone carving now on display is in the Walters Art Museum, Baltimore, Maryland. $</\text{think}>$ $<\text{answer}>$ Walters Art Museum $</\text{answer}>$ \\
($\mathbf{A_s}=\textcolor{red}{\mathbf{-0.574}}$)

\paragraph{User:} Sorry, your answer is incorrect.
\end{tcolorbox}

\clearpage

\begin{tcolorbox}[title=Example 1: the advantage of GRPO,
colback=gray!5!white, colframe=gray!75!black, breakable]
\textbf{User:} In what year was the earlier of these two documentaries produced: The Life and Times of Hank Greenberg or Burden of Dreams?

\paragraph{Assistant:} $<\text{think}>$ The question is asking about the production year of two documentaries, and we need to find out which one is earlier and then provide the year of that documentary.$</\text{think}>$ $<\text{search}>$ which documentary was produced earlier, The Life and Times of Hank Greenberg or Burden of Dreams $</\text{search}>$ \\
($\mathbf{A_s}=\textcolor[rgb]{0,0.5,0}{\mathbf{+1.29}}$)

\paragraph{User:} $<\text{information}>$ \\
\textsc{Doc 1} ... The Life and Times of Hank Greenberg is a 1998 documentary film written, directed, and produced by Aviva Kempner about Hank Greenberg, first baseman of the Detroit Tigers, who was inducted into the Baseball Hall of Fame. ... \\
\textsc{Doc 2} ... \\
\textsc{Doc 3} ... \\
$</\text{information}>$

\paragraph{Assistant:} $<\text{think}>$ From the information provided, it is clear that The Life and Times of Hank Greenberg is a 1998 documentary. Now let's find out about the production year of Burden of Dreams.$</\text{think}>$ $<\text{search}>$ when was Burden of Dreams produced $</\text{search}>$ \\
($\mathbf{A_s}=\textcolor[rgb]{0,0.5,0}{\mathbf{+1.29}}$)

\paragraph{User:} $<\text{information}>$ \\
\textsc{Doc 1} ... Burden of Dreams is a 1982 “making-of” documentary film directed by Les Blank, shot during and about the chaotic production of Werner Herzog’s film Fitzcarraldo. ...\\
\textsc{Doc 2} ... \\
\textsc{Doc 3} ... \\
$</\text{information}>$

\paragraph{Assistant:}  From the information provided, it is clear that Burden of Dreams was produced in 1982. Comparing the production years of the two documentaries, The Life and Times of Hank Greenberg (1998) and Burden of Dreams (1982), it is clear that Burden of Dreams was produced earlier. Therefore, the answer is 1982. \\
($\mathbf{A_s}=\textcolor[rgb]{0,0.5,0}{\mathbf{+1.29}}$)

\paragraph{User:} Your previous action is invalid. If you want to search, you should put the query between $<\text{search}>$ and $</\text{search}>$. If you want to give the final answer, you should put the answer between $<\text{answer}>$ and $</\text{answer}>$. Please try again.

\paragraph{Assistant:} $<\text{think}>$ From the information provided, it is clear that Burden of Dreams was produced in 1982. Comparing the production years of the two documentaries, The Life and Times of Hank Greenberg (1998) and Burden of Dreams (1982), it is clear that Burden of Dreams was produced earlier. Therefore, the answer is 1982.$</\text{think}>$ $<\text{answer}>$ 1982 $</\text{answer}>$ \\
($\mathbf{A_s}=\textcolor[rgb]{0,0.5,0}{\mathbf{+1.29}}$)

\paragraph{User:} Congratulations! You have answered the question correctly!!!

\end{tcolorbox}

\clearpage

\begin{tcolorbox}[title=Example 2: the advantage of PPO,
colback=gray!5!white, colframe=gray!75!black, breakable]
\textbf{User:} In what year was the earlier of these two documentaries produced: The Life and Times of Hank Greenberg or Burden of Dreams?

\paragraph{Assistant:} $<\text{think}>$ The question is asking about the production year of two documentaries, and we need to find out which one is earlier and then provide the year of that documentary.$</\text{think}>$ $<\text{search}>$ which documentary was produced earlier, The Life and Times of Hank Greenberg or Burden of Dreams $</\text{search}>$ \\
($\mathbf{A_s}=\textcolor[rgb]{0,0.5,0}{\mathbf{+0.821}}$)

\paragraph{User:} $<\text{information}>$ \\
\textsc{Doc 1} ... The Life and Times of Hank Greenberg is a 1998 documentary film written, directed, and produced by Aviva Kempner about Hank Greenberg, first baseman of the Detroit Tigers, who was inducted into the Baseball Hall of Fame. ... \\
\textsc{Doc 2} ... \\
\textsc{Doc 3} ... \\
$</\text{information}>$

\paragraph{Assistant:} $<\text{think}>$ From the information provided, it is clear that The Life and Times of Hank Greenberg is a 1998 documentary. Now let's find out about the production year of Burden of Dreams.$</\text{think}>$ $<\text{search}>$ when was Burden of Dreams produced $</\text{search}>$ \\
($\mathbf{A_s}=\textcolor[rgb]{0,0.5,0}{\mathbf{+0.762}}$)

\paragraph{User:} $<\text{information}>$ \\
\textsc{Doc 1} ... Burden of Dreams is a 1982 “making-of” documentary film directed by Les Blank, shot during and about the chaotic production of Werner Herzog’s film Fitzcarraldo. ...\\
\textsc{Doc 2} ... \\
\textsc{Doc 3} ... \\
$</\text{information}>$

\paragraph{Assistant:}  From the information provided, it is clear that Burden of Dreams was produced in 1982. Comparing the production years of the two documentaries, The Life and Times of Hank Greenberg (1998) and Burden of Dreams (1982), it is clear that Burden of Dreams was produced earlier. Therefore, the answer is 1982. \\
($\mathbf{A_s}=\textcolor[rgb]{0,0.5,0}{\mathbf{+0.684}}$)

\paragraph{User:} Your previous action is invalid. If you want to search, you should put the query between $<\text{search}>$ and $</\text{search}>$. If you want to give the final answer, you should put the answer between $<\text{answer}>$ and $</\text{answer}>$. Please try again.

\paragraph{Assistant:} $<\text{think}>$ From the information provided, it is clear that Burden of Dreams was produced in 1982. Comparing the production years of the two documentaries, The Life and Times of Hank Greenberg (1998) and Burden of Dreams (1982), it is clear that Burden of Dreams was produced earlier. Therefore, the answer is 1982.$</\text{think}>$ $<\text{answer}>$ 1982 $</\text{answer}>$ \\
($\mathbf{A_s}=\textcolor[rgb]{0,0.5,0}{\mathbf{+0.707}}$)

\paragraph{User:} Congratulations! You have answered the question correctly!!!

\end{tcolorbox}

\clearpage

\begin{tcolorbox}[title=Example 3: the advantage of HPO,
colback=gray!5!white, colframe=gray!75!black, breakable]
\textbf{User:} In what year was the earlier of these two documentaries produced: The Life and Times of Hank Greenberg or Burden of Dreams?

\paragraph{Assistant:} $<\text{think}>$ The question is asking about the production year of two documentaries, and we need to find out which one is earlier and then provide the year of that documentary.$</\text{think}>$ $<\text{search}>$ which documentary was produced earlier, The Life and Times of Hank Greenberg or Burden of Dreams $</\text{search}>$ \\
($\mathbf{A_s}=\textcolor[rgb]{0,0.5,0}{\mathbf{+0.424}}$)

\paragraph{User:} $<\text{information}>$ \\
\textsc{Doc 1} ... The Life and Times of Hank Greenberg is a 1998 documentary film written, directed, and produced by Aviva Kempner about Hank Greenberg, first baseman of the Detroit Tigers, who was inducted into the Baseball Hall of Fame. ... \\
\textsc{Doc 2} ... \\
\textsc{Doc 3} ... \\
$</\text{information}>$

\paragraph{Assistant:} $<\text{think}>$ From the information provided, it is clear that The Life and Times of Hank Greenberg is a 1998 documentary. Now let's find out about the production year of Burden of Dreams.$</\text{think}>$ $<\text{search}>$ when was Burden of Dreams produced $</\text{search}>$ \\
($\mathbf{A_s}=\textcolor[rgb]{0,0.5,0}{\mathbf{+0.218}}$)

\paragraph{User:} $<\text{information}>$ \\
\textsc{Doc 1} ... Burden of Dreams is a 1982 “making-of” documentary film directed by Les Blank, shot during and about the chaotic production of Werner Herzog’s film Fitzcarraldo. ...\\
\textsc{Doc 2} ... \\
\textsc{Doc 3} ... \\
$</\text{information}>$

\paragraph{Assistant:}  From the information provided, it is clear that Burden of Dreams was produced in 1982. Comparing the production years of the two documentaries, The Life and Times of Hank Greenberg (1998) and Burden of Dreams (1982), it is clear that Burden of Dreams was produced earlier. Therefore, the answer is 1982. \\
($\mathbf{A_s}=\textcolor{red}{\mathbf{-0.472}}$)

\paragraph{User:} Your previous action is invalid. If you want to search, you should put the query between $<\text{search}>$ and $</\text{search}>$. If you want to give the final answer, you should put the answer between $<\text{answer}>$ and $</\text{answer}>$. Please try again.

\paragraph{Assistant:} $<\text{think}>$ From the information provided, it is clear that Burden of Dreams was produced in 1982. Comparing the production years of the two documentaries, The Life and Times of Hank Greenberg (1998) and Burden of Dreams (1982), it is clear that Burden of Dreams was produced earlier. Therefore, the answer is 1982.$</\text{think}>$ $<\text{answer}>$ 1982 $</\text{answer}>$ \\
($\mathbf{A_s}=\textcolor[rgb]{0,0.5,0}{\mathbf{+0.377}}$)

\paragraph{User:} Congratulations! You have answered the question correctly!!!

\end{tcolorbox}

\end{document}